\newcommand{\hX}{\widehat X}
\newcommand{\hY}{\widehat Y}
\newcommand{\hxi}{\widehat\xi}
\newcommand{\hH}{\widehat H}
\newcommand{\p}{\mathbb{P}}
\newcommand{\e}{\mathbb{E}}
\newcommand{\argmax}{\operatornamewithlimits{argmax}}
\newtheorem{theorem}{Theorem}
\newtheorem{proposition}[theorem]{Proposition}
\newtheorem{lemma}[theorem]{Lemma}
\newtheorem{corollary}[theorem]{Corollary}
\theoremstyle{definition}
\newtheorem{definition}[theorem]{Definition}
\newtheorem{remark}{Remark}
\date{}
\def\thm@space@setup{%
  \thm@preskip=\parskip \thm@postskip=0pt
}
\begin{document}
\title{Community Detection and Classification in Hierarchical Stochastic Blockmodels}
\author{Vince Lyzinski, 
        Minh Tang, Avanti Athreya,
         Youngser Park, Carey E. Priebe 
\IEEEcompsocitemizethanks{\IEEEcompsocthanksitem \noindent V.L. is with Johns Hopkins University Human Language Technology Center of Excellence.  M. T., A. A. and C. E. P. are with Johns Hopkins University Department of Applied Mathematics and Statistics.  Y. P. is with Johns Hopkins University Center for Imaging Sciences. .\protect
}
}
\date{\today}
\IEEEcompsoctitleabstractindextext{%
\begin{abstract}
In disciplines as diverse as social
network analysis and neuroscience, many large graphs are believed to
be composed of loosely connected smaller graph primitives, whose
structure is more amenable to analysis
  We propose a robust, scalable, integrated methodology for {\em community detection} and
{\em community comparison} in graphs.
In our procedure, we first embed a graph into an appropriate Euclidean
space to obtain a low-dimensional representation, and then cluster the
vertices into communities.  We next employ nonparametric graph
inference techniques to identify structural similarity among these
communities.  These two steps are then applied recursively on the
communities, allowing us to detect more fine-grained structure.  We
describe a {\em hierarchical stochastic blockmodel}---namely, a
stochastic blockmodel with a natural hierarchical structure---and
establish conditions under which our algorithm yields consistent
estimates of model parameters and {\em motifs}, which we define to be
stochastically similar groups of subgraphs.  Finally, we demonstrate
the effectiveness of our algorithm in both simulated and real
data. Specifically, we address the problem of locating similar
sub-communities in a partially reconstructed {\it Drosophila} 
connectome and in the social network Friendster.
\end{abstract}}
\maketitle

\section{Introduction}
\label{sec:introduction}
\IEEEPARstart{T}he representation of data as graphs, with the vertices as entities
and the edges as relationships between the entities, is now ubiquitous
in many application domains: for example, social networks, in which vertices
represent individual actors or organizations \cite{wasserman};
neuroscience, in which vertices are neurons or brain regions
\cite{sporns_complex}; and document analysis, in which vertices
represent authors or documents \cite{deSolla}. This representation
has proven invaluable in describing and modeling the intrinsic and
complex structure that underlies these data.

In understanding the structure of large, complex graphs, a central
task is that of identifying and classifying local, lower-dimensional
structure, and more specifically, consistently and scalably estimating
subgraphs and subcommunities.  In disciplines as diverse as social
network analysis and neuroscience, many large graphs are believed to
be composed of loosely connected smaller graph primitives, whose
structure is more amenable to analysis.  For example, the
widely-studied social network Friendster\footnote{available from
  \url{http://snap.stanford.edu/data}}, which has approximately 60
million users and 2 billion edges, is believed to consist of over 1
million communities at local-scale.  Insomuch as the communication
structure of these social communities both influences and is influenced by the
function of the social community, we expect there to be repeated
structure across many of these communities (see Section
\ref{sec:data}).  As a second motivating example, the neuroscientific
{\em cortical column conjecture}
\cite{mountcastle1997columnar,marcus2014atoms} posits that the neocortex of
the human brain employs algorithms composed of repeated instances of a
limited set of computing primitives.  By modeling certain portions of
the cortex as a hierarchical random graph, the cortical column
conjecture can be interpreted as a problem of community detection and
classification within a graph.  While the full data needed to test the
cortical column conjecture is not yet available
\cite{takemura2013visual}, it nonetheless motivates our present
approach of theoretically-sound robust hierarchical community
detection and community classification.

Community detection
for graphs is a well-established field of study, and there are many
techniques and methodologies available, such as those based on
maximizing modularity and likelihood \cite{Bickel2009,networks08:_v, Newman2004},
random walks \cite{pons05:_comput, rosvall08:_maps}, and spectral
clustering and partitioning
\cite{McSherry2001,rohe2011spectral,STFP,von2007tutorial,qin2013dcsbm,chaudhuri12:_spect}. While many of these results focus on the consistency of the algorithms---namely, that the proportion of misclassified vertices goes to zero---the key results in this paper give guarantees on the probability of {\em perfect clustering}, in which no vertices at all are misclassified.  As such, they are similar in spirit to the results of \cite{Bickel2009} and represent a considerable improvement of our earlier clustering results from \cite{perfect}. As might be expected, though, the strength of our results depends on the average degree of the graph, which we require to grow at least at order $\sqrt{n} \log^{2} (n)$. We note that weak or partial recovery results are available for much sparser regimes, e.g., when the average degree stays bounded as the number of vertices $n$ increases (see, for example, the work of \cite{mossel:ptrf}). A partial summary of various consistency results and sparsity regimes in which they hold is given in Table~\ref{tab:clustering}. Existing theoretical results on clustering have also been centered primarily on isolating fine-grained community structure in a network.  A major contribution of this work, then, is a formal delineation of {\em hierarchical} structure in a network and a provably consistent algorithm to uncover communities and subgraphs at multiple scales. 

\begin{table*}
\centering
\begin{tabular}{|c|c|c|c|}
\hline Average degree & Method & Notion of recovery & References \\ \hline
$O(1)$ & semidefinite programming, backtracking random walks & weak recovery & \cite{mossel:ptrf,hajek,chaogao} \\ \hline
$\Omega(\log{n})$ & spectral clustering & weak consistency & \cite{rohe2011spectral,rinaldo_2013,STFP} \\ \hline
$\Omega(\log{n})$ & modularity maximization & strong consistency & \cite{Bickel2009} \\ \hline
$\Omega(\sqrt{n} \log^{2} n)$ & spectral clustering & strong consistency & \cite{perfect} \\ \hline
\end{tabular}
\caption{A summary of some existing results on the consistency of recovering the block assignments in stochastic blockmodel graphs with fixed number of blocks. Weak recovery and weak consistency correspond to the notions that, in the limit as $n \rightarrow \infty$, the proportion of correctly classified vertices is non-zero and approaches one in probability, respectively. Strong consistency corresponds to the notion that the number of misclassified vertices is zero in the limit.}
\label{tab:clustering}
\end{table*}

Moreover, existing community detection algorithms have focused 
mostly on uncovering the subgraphs. Recently, however, the characterization and
classification of these subgraphs into stochastically similar motifs
has emerged as an important area of ongoing research.
Network comparison is a nascent field, and comparatively few techniques have
thus far been proposed; see
\cite{pao11:_statis_infer_random_graph,
  rukhin11,koutra13:_deltac,rosvall10:_mappin,asta14:_geomet,tang14:_nonpar,
  tang14:_semipar}. 
In particular, in \cite{tang14:_nonpar}, the authors exhibit a consistent nonparametric test for the equality of two generating distributions for a pair of random graphs. 
The method is based
on first embedding the networks into Euclidean space 
followed by computing $L_2$ distances between the density
estimates of the resulting embeddings.  
This hypothesis test will play a central role in our present methodology; see Section \ref{sec:Background}.


In the present paper, we introduce a robust, scalable methodology for {\it community detection} and
{\em community comparison} in graphs, with particular application to
social networks and connectomics.  Our techniques build upon
previous work in graph embedding, parameter
estimation, and multi-sample hypothesis testing (see \cite{STFP,
  perfect, tang14:_semipar, tang14:_nonpar}).
Our method proceeds as follows.  
First, we generate a low-dimensional
representation of the graph \cite{STFP}, cluster to detect subgraphs of interest \cite{perfect}, and then employ 
the nonparametric inference techniques of \cite{tang14:_nonpar} to identify
heterogeneous subgraph structures. 
The representation of a
network as a collection of points in Euclidean space allows for a single framework which combines the steps  of community detection via an adapted spectral clustering procedure (Algorithm \ref{alg:main0}) with network comparison
via density estimation. 
Indeed, the streamlined clustering algorithm proposed in this paper, Algorithm \ref{alg:main0}, is well-suited to our hierarchical framework, whereas classical $K$-means may be ill-suited to the pathologies of this model. 
As a consequence, we are able to present in this paper a unified inference procedure in which community detection, motif identification, and larger network comparison are all seamlessly integrated. 

We focus here on a {\em hierarchical} version of the classical stochastic block
model \cite{Wang1987,Holland1983}, in which the larger graph is comprised of smaller subgraphs, each themselves approximately stochastic blockmodels. We emphasize that our model and subsequent theory rely heavily on an affinity assumption at each level of the hierarchy, and we expect our model to be a reasonable surrogate for a wide range of real networks, as corroborated by our empirical results. In our approach, we aim to infer finer-grained structure at each level of our hierarchy, in effect performing a ``top-down" decomposition.  (For a different generative hierarchical model, in which successive-level blocks and memberships are the inference taks, see \cite{Peixoto_HSBM}.) We recall that the stochastic blockmodel (SBM) is an independent-edge random graph model that posits that the probability of connection between any two vertices is a function of the 
{\em block memberships} (i.e., community memberships) of the vertices. As such, the stochastic blockmodel is commonly used to model community structure in graphs.   
While we establish performance guarantees for this methodology in the
setting of hierarchical stochastic blockmodels (HSBM), we demonstrate the
wider effectiveness of our algorithm for simultaneous community
detection and classification in the {\em Drosophila} connectome and
the very-large scale social network Friendster, which has
approximately 60 million users and 2 billion edges. 

We organize the paper as follows.  In Section \ref{sec:Background}, we provide the key definitions in our model, specifically for random dot product graphs, SBM graphs, and HSBM graphs. We summarize recent results on networks comparison from \cite{tang14:_nonpar}, which is critical to our main algorithm, Algorithm \ref{alg:main}.  We also present our novel clustering procedure, Algorithm \ref{alg:main0}. 
In Section \ref{sec:HSBM}, we demonstrate how, under mild model assumptions, Algorithm \ref{alg:main} can be applied to asymptotically almost surely perfectly recover the motif structure in a two-level HSBM, see Theorem \ref{thm:1}. 
In Section \ref{sec:multilevel}, we consider a HSBM with multiple levels and discuss the recursive nature of Algorithm \ref{alg:main}.  
We also extend Theorem \ref{thm:1} to the multi-level HSBM and show, under mild model assumptions, Algorithm \ref{alg:main} again asymptotically almost surely perfectly recovers the hierarchical motif structure in a multi-level HSBM.  
In Section \ref{sec:data}, we demonstrate that Algorithm \ref{alg:main} can be effective in uncovering statistically similar subgraph structure in real data: first, in the {\em Drosophila} connectome, in which we uncover two repeated motifs; and second, in the Friendster social network, in which we decompose the massive network into 15 large subgraphs, each with hundreds of thousands to millions of vertices.  We identify motifs among these Friendster subgraphs, and we compare two subgraphs belonging to different motifs.  We further analyze a particular subgraph from a single motif and demonstrate that we can identify structure at the second (lower) level.  In Section \ref{sec:Conclusion}, we conclude by remarking on refinements and extensions of this approach to community detection.

\section{Background}
\label{sec:Background}
We situate our approach in the context of hierarchical stochastic
blockmodel graphs. We first define the stochastic blockmodel as a
special case of the more general random dot product graph model
\cite{nickel2006random}, which is itself a special case of the more
general latent position random graph \cite{Hoff2002}.  We next
describe our canonical {\em hierarchical stochastic blockmodel}, which
is a stochastic blockmodel that is endowed with a natural hierarchical
structure.

{\bf Notation:} In what follows, for a matrix $M\in\mathbb{R}^{n\times
  m},$ we shall use the notation $M(i,:)$ to denote the $i$-th row of
$M$, and $M(:,i)$ to denote the $i$-th column of $M$.  For a symmetric
matrix $M\in\mathbb{R}^{n\times n},$ we shall denote the (ordered)
spectrum of $M$ via $\lambda_1(M)\geq \lambda_2(M)\geq\cdots\geq
\lambda_n(M).$

We begin by defining the {\em random dot product} graph.
\begin{definition}[$d$-dimensional Random Dot Product Graph (RDPG)]\label{def:rdpg}
  Let $F$ be a distribution on a set $\mathcal{X}\subset\mathbb{R}^d$
  such that $\langle x,x'\rangle\in[0,1]$ for all
  $x,x'\in\mathcal{X}.$ We say that $(X,A)\sim\mathrm{RDPG}(F)$ is an
  instance of a random dot product graph (RDPG) if
  $X=[X_1,\dotsc,X_n]^\top$ with
  $X_1,X_2,\ldots,X_n\stackrel{\text{i.i.d.}}{\sim}F$, and
  $A\in\{0,1\}^{n\times n}$ is a symmetric hollow matrix satisfying
\[ \p[A|X] = \prod_{i>j} (X_i^\top X_j)^{A_{ij}}{(1-X_i^\top X_j)}^{1-A_{ij}}.\]
\end{definition}
\begin{remark}
  We note that non-identifiability is an intrinsic property of random
  dot product graphs. Indeed, for any matrix $X$ and any orthogonal
  matrix $W$, the inner product between any rows $i,j$ of $X$ is
  identical to that between the rows $i,j$ of $XW$. Hence, for any
  probability distribution $F$ on $\mathcal{X}$ and unitary operator
  $U$, the adjacency matrices $A \sim \mathrm{RDPG}(F)$ and $B \sim
  \mathrm{RDPG}(F \circ U)$ are identically distributed.
\end{remark}
We denote the second moment matrix for the vectors $X_i$ by $\Delta=\e(X_1X_1^T)$;  we assume that $\Delta$ is rank $d$.

The stochastic blockmodel can be
framed in the context of random dot product graphs as follows.
\begin{definition} We say that an $n$ vertex graph $(X,A)\sim\mathrm{RDPG}(F)$
is a (positive semidefinite) stochastic blockmodel (SBM) with $K$ blocks if 
the distribution $F$ is a mixture of $K$ point masses,
$$F=\sum_{i=1}^K \pi(i) \delta_{\xi_i},$$ 
where $\vec{\pi}\in(0,1)^K$ satisfies $\sum_i \pi(i)=1$, and 
the distinct latent positions are given by $\xi=[\xi_1,\xi_2,\ldots,\xi_K]^\top\in\mathbb{R}^{K\times d}$.  
In this case, we write $G\sim SBM(n,\vec{\pi},\xi\xi^\top),$
and  we refer to $\xi\xi^\top\in\mathbb{R}^{K,K}$ as the \emph{block probability matrix} of $G$. Moreover, any stochastic blockmodel graphs where the block probability matrix $B$ is positive semidefinite can be formulated as a random dot product graphs where the point masses are the rows of $B^{1/2}$. 
\end{definition}

Many real data networks exhibit hierarchical community structure (for
social network examples, see \cite{clauset04:_findin,
  mariadassou10:_uncov, sales-pardo07:_extrac, leskovec10:_kronec,
  park10:_dynam_bayes, ahn10:_link, Peixoto_HSBM}; for biological examples, see
\cite{mountcastle1997columnar,takemura2013visual,marcus2014atoms}).
To incorporate hierarchical structure into the above RDPG and SBM
framework, we first consider SBM graphs endowed with the following
specific hierarchical structure.

\begin{definition}[2-level Hierarchical stochastic blockmodel (HSBM)]
\label{def:HSBM}
We say that $(X,A)\sim \mathrm{RDPG}(F)$ is an instantiation of a
$D$-dimensional 2-level hierarchical stochastic blockmodel with parameters $(n,\vec{\pi},\{\vec{\pi}_i\}_{i=1}^R,\xi)$ if $F$ can be
written as the mixture
$$F=\sum_{i=1}^R \pi(i) F_i,$$ 
where $\vec\pi\in(0,1)^R$ satisfies $\sum_i \pi(i)=1$, and for each
$i\in[R]$, $F_i$ is itself a mixture of point mass distributions
$$F_i=\sum_{j=1}^{K_i} \pi_i(j) \delta_{\xi^{(2)}_i(:,j)}$$ 
where $\vec\pi_i\in(0,1)^{K_i}$ satisfies $\sum_j \pi_i(j)=1$.  
The distinct
latent positions are then given by
$\xi=[\xi^{(2)}_1|\cdots
|\xi^{(2)}_R]^\top\in\mathbb{R}^{(\sum_i K_i)\times D}$.
We then write
$$G\sim\text{HSBM}(n,\vec{\pi},\{\vec{\pi}_i\}_{i=1}^R,\xi\xi^\top).$$
\end{definition}
Simply stated, an HSBM graph is an RDPG for which the
vertex set can be partitioned into $R$ subgraphs---where $(\xi^{(2)}_i)^T\in\mathbb{R}^{ K_i\times D}$ denotes the matrix whose rows are the latent positions
characterizing the block probability matrix for subgraph $i$---each of which
is itself an SBM.

Throughout this manuscript, we will make a number of simplifying assumptions on the underlying HSBM in order to facilitate theoretical developments and ease exposition.\\
{\bf Assumption 1.} (Affinity HSBM) We further assume that for the distinct latent positions, if we define
\begin{align}
q&:=\min_{i,h,l} \langle \xi^{(2)}_i(:,l), \xi^{(2)}_i(:,h) \rangle;\\
p&:=\max_{i\neq j}\max_{\ell,h}\big\langle \xi^{(2)}_i(:,\ell),\xi^{(2)}_j(:,h)\big\rangle
\end{align} 
then  $p<q$.
Simply stated, we require that within each subgraph, the connections are comparatively dense, and between two subgraphs, comparatively sparse.\\
{\bf Assumption 2.}  (Subspace structure) To simplify exposition, and to assure the condition that
$\big\langle \xi^{(2)}_i(:,\ell),\xi^{(2)}_j(:,h)\big\rangle\leq p$ for
$1\leq i\neq j\leq R$ and $\ell,h\in[K],$ we impose additional
structure on the matrix of latent positions in the HSBM.  
To wit, we write
$\xi\in\mathbb{R}^{RK\times D}$ explicitly as
\begin{equation*}
\label{eq:X}
\xi=\begin{bmatrix}
       (\xi^{(2)}_1)^T\\
       (\xi^{(2)}_2)^T\\
       \vdots\\
       (\xi^{(2)}_R)^T
     \end{bmatrix}=\underbrace{\begin{bmatrix}
       \xi_1^T & 0 & \cdots&0\\
       0 & \xi_2^T           & \cdots &0\\
       \vdots           & \vdots& \ddots&\vdots\\
       0&0&\cdots&\xi_R^T
     \end{bmatrix}}_{Z^{(2)}}+\mathcal{A}^{(2)},
\end{equation*}
where $Z^{(2)}\circ \mathcal{A}^{(2)}=0 $ ($\circ$ being the Hadamard product)
and the entries of $\mathcal{A}^{(2)}$ are chosen to make
the off block-diagonal elements of the corresponding edge probability
matrix $\xi\xi^T$ bounded above by the absolute constant $p$.  
Moreover, to ease exposition in this 2-level setting, we will assume that for each $i\in[R]$, $\xi_i\in \mathbb{R}^{K\times d}$ so that $D=Rd$.  In practice, the subspaces pertaining to the individual subgraphs need not be the same rank, and the subgraphs need not have the same number of blocks (see Section \ref{sec:data} for examples of $K$ and $d$ varying across subgraphs).


\begin{remark}
Note that 
$G\sim\text{HSBM}(n,\vec{\pi},\{\vec{\pi}_i\}_{i=1}^R,\xi\xi^\top)$
can be viewed as a SBM graph with
$\sum_i K_i$ blocks; 
$G \sim SBM(n,(\pi(1)\vec{\pi}_1,\pi(2)\vec{\pi}_2,\ldots,\pi(R)\vec{\pi}_R),\xi\xi^\top)$. 
However, in this paper we will consider blockmodels with statistically similar subgraphs across blocks, and in general, such models can be parameterized by far fewer than $\sum_i K_i$ blocks. In contrast, when the graph is viewed as an RDPG, the full $\sum_i K_i$ dimensions may be needed, because our affinity assumption necessitates a growing number of dimensions to accommodate the potentially growing number of subgraphs. Because latent positions associated to vertices in different subgraphs must exhibit near orthogonality, teasing out the maximum possible number of subgraphs for a given embedding dimension is, in essence, a cone-packing problem; while undoubtedly of interest, we do not pursue this problem further in this manuscript.  
\end{remark}



Given a graph from this model satisfying Assumptions 1 and 2, we use Algorithm~\ref{alg:main} to uncover the hidden hierarchical structure.  Furthermore, we note that Algorithm \ref{alg:main} can be applied to uncover hierarchical structure in any hierarchical network, regardless of HSBM model assumptions.
However, our theoretical contributions are proven under HSBM model assumptions.

A key component of
this algorithm is the computation of the adjacency spectral embedding \cite{STFP},
defined as follows.
\begin{definition}
  \label{ase}
 Given an adjacency matrix $A \in \{0,1\}^{n \times n}$ of a $d$-dimensional RDPG($F$),  
  the {\em adjacency spectral embedding} (ASE) of $A$ into
  $\mathbb{R}^{d}$ is given by $\hX=U_A
  S_A^{1/2}$ where
$$|A|=[U_A|\tilde U_A][S_A
\oplus \tilde S_A][U_A|\tilde U_A]$$ is the spectral decomposition of $|A| = (A^{\top} A)^{1/2}$, $S_A$ is the diagonal matrix with the (ordered) $d$ largest
eigenvalues of $|A|$ on its diagonal, and $U_A\in\mathbb{R}^{n\times d}$ is the matrix whose columns are the corresponding orthonormal eigenvectors of $|A|$.
\end{definition} 

\begin{algorithm}[t!]
  \begin{algorithmic}
    \STATE \textbf{Input}: Adjacency matrix $A\in
    \{0,1\}^{n\times n}$ for a latent position random graph.
    \STATE \textbf{Output}: Subgraphs and characterization of their dissimilarity
\WHILE {Cluster size exceeds threshold}
\STATE {\em Step 1}: 
Compute the adjacency spectral embedding (see Definition~\ref{ase}) $\hX$ of $A$ into $\mathbb{R}^{D}$;
   \STATE {\em Step 2}: Cluster $\hX$ to obtain subgraphs $\hH_1,
    \cdots, \hH_R$ using the procedure described in Algorithm~\ref{alg:main0}
    \STATE {\em Step 3}: For each $i\in[R],$ compute the adjacency
    spectral embedding for each subgraph
    $\hH_i$ into $\mathbb{R}^{d}$, obtaining $\hX_{\hH_i}$; 
    \STATE \textit{Step 4}: Compute $\widehat S:=[T_{\hat n_r,\hat n_s}(\hX_{\hH_r}, \hX_{\hH_s})]$, where $T$ is the test statistic in Theorem~\ref{thm:mmd_unbiased_ase}, producing a pairwise dissimilarity matrix on induced subgraphs;
\STATE \textit{Step 5}: Cluster induced subgraphs into motifs using the dissimilarities given in $\widehat S$; e.g., use a hierarchical clustering algorithm to cluster the rows of $\widehat{S}$ or the matrix of associated $p$-values.
\STATE \textit{Step 6}: Recurse on a representative subgraph from each motif (e.g., the largest subgraph), embedding into $\mathbb{R}^d$ in Step 1 (not $\mathbb{R}^D$);
\ENDWHILE
\end{algorithmic}
\caption{Detecting hierarchical structure for graphs}
\label{alg:main}
\end{algorithm} 
It is proved in \cite{STFP,sussman12:_univer} that the adjacency
spectral embedding provides a consistent estimate of the true latent
positions in random dot product graphs. The key to this result is a
tight concentration, in Frobenius norm, of the adjacency spectral
embedding, $\widehat X$, about the true latent positions $X$.  This
bound is strengthened in \cite{perfect}, wherein the authors show
tight concentration, in $2\mapsto\infty$ norm, of $\widehat X$ about
$X$.  The $2 \mapsto \infty$ concentration provides a significant
improvement over results that employ bounds on the Frobenius norm of
the residuals between the estimated and true latent positions, namely
$\|\hX-X\|_F$. The Frobenius norm bounds are potentially sub-optimal
for subsequent inference, because one cannot rule out that a
diminishing but positive proportion of the embedded points contribute
disproportionately to the global error.

However, the $2\mapsto\infty$ norm concentration result in
\cite{perfect} relies on the assumption that the eigenvalues of
$\e[X_1 X_1^T]$ are distinct , which is often violated in the setting
of repeated motifs for an HSBM.  One of the main contributions of this
paper is a further strengthening of the results of \cite{perfect}: in
Theorem \ref{thm:minh}, we prove that $\widehat X$ concentrates about
$X$ in $2\mapsto\infty$ norm with far less restrictive assumptions on
the eigenstructure of $\e[X_1 X_1^T]$.

In this paper, if $E_n$ is a sequence of events, we say that $E_n$ {\em occurs asymptotically almost surely} if $P(E_n) \rightarrow 1$ as $n \rightarrow \infty$; more precisely, we say that $E_n$ occurs asymptotically almost surely if for any fixed $c>0$, there exists $n_0(c)$ such that if $n>n_0(c)$ and $\eta$ satisfies $n^{-c}<\eta<1/2$, then $P(E_n)$ is at least $1-\eta$.  The theorem below asserts that the $2 \mapsto \infty$ norm of the differences between true and estimated latent positions is of a certain order asymptotically almost surely. In the appendix, we state and prove a generalization of this result in the non-dense regime.

\begin{theorem}
\label{thm:minh}
Let $(A, X) \sim \mathrm{RDPG}(F)$ where the second moment matrix $\Delta=\e(X_1 X_1^T)$ is of rank $d$. 
Let $E_n$ be the event that there exists  
a rotation matrix $W$ such that 
\begin{equation*}
 \|\hat{X} - X W\|_{2 \rightarrow \infty}= \max_{i} \| \hat{X}(i,:) -
 W X(i,:) \| \leq \frac{C d^{1/2} \log^2{n}}{\sqrt{n}}
   \end{equation*}
   where $C$ is some fixed constant. Then $E_n$ occurs asymptotically
   almost surely.
\end{theorem}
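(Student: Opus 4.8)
The plan is to write the embedding through the identity $\hat X = U_A S_A^{1/2} = A\,U_A S_A^{-1/2}$ — valid on the top-$d$ eigenspace because in this regime the $d$ largest eigenvalues of $A$ are positive and of order $n$, so that $|A|$ agrees with $A$ there — and to compare $\hat X$ row by row with $X = U_P S_P^{1/2} W_X = P\,U_P S_P^{-1/2} W_X$, where $P = XX^\top$ is the rank-$d$ edge probability matrix, $P = U_P S_P U_P^\top$ its spectral decomposition, and $W_X$ is orthogonal. The essential difference from \cite{perfect} is that the nonzero eigenvalues of $\Delta$ — equivalently the top-$d$ eigenvalues of $P$ — need not be distinct; accordingly every estimate will be phrased in terms of the invariant subspaces $U_P U_P^\top$, $U_A U_A^\top$, the diagonal blocks $S_P, S_A$, and an orthogonal Procrustes alignment $W_1$ between $U_A$ and $U_P$, never in terms of individual eigenvectors or of spectral gaps internal to the top-$d$ block.

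First I would assemble the probabilistic inputs, each on an event of probability at least $1-n^{-c}$ (so that intersecting finitely many of them still leaves an a.a.s.\ event, uniformly in $\eta$ with $n^{-c}<\eta<1/2$): (i) a bound $\|A-P\|_{\mathrm{op}} = O(\sqrt n)$ (matrix Bernstein, or the bounds of Oliveira / Lu--Peng / Bandeira--van Handel), with a $\sqrt{\log n}$ factor absorbed to make the failure probability polynomially small; (ii) a matrix concentration bound $\|X^\top X - n\Delta\|_{\mathrm{op}} = O(\sqrt{n\log n})$ for the i.i.d.\ bounded summands $X_iX_i^\top$, which, since $\mathrm{rank}(\Delta)=d$, forces $\lambda_d(P)\ge cn$ and hence a gap of order $n$ between $\lambda_d(P)$ and $\lambda_{d+1}(P)=0$; (iii) the deterministic inequality $\|X\|_{2\to\infty}\le 1$ (the support lies in the unit ball since $\langle x,x\rangle\in[0,1]$), which with (ii) yields $\|U_P\|_{2\to\infty}=O(\sqrt d/\sqrt n)$; and (iv) a row-wise vector Bernstein bound $\|(A-P)U_P\|_{2\to\infty} = O(\sqrt d\,\log n)$ — each row is a sum of $n$ independent mean-zero vectors of norm at most $\|U_P\|_{2\to\infty}$ with total variance $O(d)$, and a union bound over the $n$ rows costs another $\log$. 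From (i)--(ii) one propagates to the empirical side $\lambda_d(A)\ge cn$, $\lambda_{d+1}(|A|)=O(\sqrt n)$, and via Davis--Kahan $\|U_AU_A^\top - U_PU_P^\top\|_{\mathrm{op}} = O(\|A-P\|_{\mathrm{op}}/n) = O(1/\sqrt n)$, with a Procrustes alignment $W_1$ obeying $\|U_A - U_PW_1\|_{2\to\infty}, \|U_A-U_PW_1\|_{\mathrm{op}} = O(1/\sqrt n)$ and $\|U_P^\top U_A - W_1\|_{\mathrm{op}} = O(\|\sin\Theta\|_{\mathrm{op}}^2) = O(1/n)$.

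Next comes the algebraic core: choosing the rotation $W$ as a suitable product of $W_1$ and $W_X$, decompose
\[
\hat X - X W \;=\; (A - P)\,U_A S_A^{-1/2} \;+\; P\big(U_A S_A^{-1/2} - U_P S_P^{-1/2} W_2\big),
\]
expand $U_A = U_P W_1 + (U_P^\top U_A - W_1)$-type terms $+\,(I - U_PU_P^\top)U_A$, and use $P = U_PS_PU_P^\top$ in the second summand. Each resulting piece is bounded in $2\to\infty$ norm by a product of quantities already controlled — $\|(A-P)U_P\|_{2\to\infty}$, $\|U_P\|_{2\to\infty}$, $\|A-P\|_{\mathrm{op}}$, $\|U_AU_A^\top-U_PU_P^\top\|_{\mathrm{op}}$, $\|S_A^{-1/2}\|_{\mathrm{op}} = O(1/\sqrt n)$, $\|S_P\|_{\mathrm{op}} = O(n)$ — together with $\|MN\|_{2\to\infty}\le\|M\|_{2\to\infty}\|N\|_{\mathrm{op}}$. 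The dominant contribution is $(A-P)U_PW_1S_A^{-1/2}$, of size $O(\sqrt d\,\log n)\cdot O(1/\sqrt n)$; the remaining terms — in particular $U_PS_P\big(U_P^\top U_AS_A^{-1/2} - S_P^{-1/2}W_2\big)$, where $S_A$- and $S_P$-type quantities meet — are handled by comparing $U_A^\top A U_A$ with $W_1^\top S_P W_1$ in operator norm, which is insensitive to eigenvalue multiplicities, rather than matching diagonal entries of $S_A$ and $S_P$. Collecting everything gives $\|\hat X - XW\|_{2\to\infty} = O(\sqrt d\,\log^2 n/\sqrt n)$, the extra power of $\log n$ being the price of driving all failure probabilities to be polynomially small.

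The hard part will be precisely this last expansion: re-deriving every residual estimate that \cite{perfect} obtained from an individual eigenvalue gap using only the global gap $\lambda_d(P)-\lambda_{d+1}(P) = \Theta(n)$ and subspace-level Procrustes bounds, and in particular avoiding any diagonal-to-diagonal comparison of $S_A$ and $S_P$, which is exactly what fails under repeated eigenvalues. A secondary but nontrivial task is the bookkeeping of constants and $\log$ powers across the several high-probability events, so that the bound holds asymptotically almost surely in the strong, $\eta$-uniform sense stated; the non-dense version in the appendix follows the same skeleton with $n$ replaced throughout by the maximum expected degree.
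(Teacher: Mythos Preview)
Your proposal is correct and follows essentially the same strategy as the paper: both decompose $\hat X - XW$ so that the dominant contribution is $(A-P)U_P S_P^{-1/2}$, handle repeated eigenvalues by working with the Procrustes alignment $W^{*}$ of $U_P^{\top}U_A$ rather than individual eigenvectors, and control the main term row-wise by Hoeffding/Bernstein applied to $(A-P)u_j$. The only organizational difference is that the paper first proves an intermediate Frobenius-norm result via an explicit commutation lemma $\|W^{*}S_A - S_P W^{*}\|_F = O(\log n)$ (obtained from Hoeffding on $U_P^{\top}(A-P)U_P$) and then upgrades to the $2\to\infty$ bound, whereas you argue directly in $2\to\infty$ and use the delocalization $\|U_P\|_{2\to\infty}=O(\sqrt d/\sqrt n)$ to absorb the $P(\cdots)$ residual---which indeed means your route tolerates the cruder $O(\sqrt n)$ bound on the commutator coming from $\|A-P\|$ alone.
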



We stress that because of this bound on the $2\rightarrow \infty$
norm, we have far greater control of the errors in individual rows of
the residuals $\hX-X$ than possible with existing Frobenius norm bounds.
One consequence of this control is that an asymptotically perfect clustering procedure for $X$ will yield an equivalent asymptotically almost surely perfect clustering of $\hX$.
This insight is the key to proving Lemma \ref{lem:perfect}, see the appendix for full detail.
A further consequence of Theorem \ref{thm:minh}, in the setting of
random dot product graphs without a canonical block structure,
is that one can choose a loss function with respect to which ASE followed by a suitable clustering yields optimal
clusters \cite{sussman12:_univer,perfect}.  This implies that
meaningful clustering can be pursued even when no canonical
hierarchical structure exists.

Having successfully embedded the graph $G$ into $\mathbb{R}^D$ through
the adjacency spectral embedding, we next cluster the vertices of $G$,
i.e., rows of $\hX$.  
For each $i\in[R]$, we define 
$$(\widehat \xi^{(2)}_i)^T\in\mathbb{R}^{|V(H_i)|\times D}$$ to be the matrix whose rows are the rows in $\hX$ corresponding to the latent positions in the rows of $(\xi^{(2)}_i)^T$.
Our clustering algorithm proceeds as follows.  
With Assumptions 1 and 2, and further assuming that $R$ is known, we first build a ``seed''
set $\mathcal{S}_n$ 
 as follows.
Initialize $\mathcal{S}_0$ to be a random sampling of $R$ rows of $\hX.$
For each $i\in[n]$, let $\tilde{y},\tilde{z}\in\mathcal{S}_{i-1}$ be such that
$$\max_{y, z\in \mathcal{S}_{i-1}}\langle y, z \rangle=\langle \tilde{y}, \tilde{z} \rangle. $$
If $
\max_{x \in \mathcal{S}_{i-1}} \langle \hX(i,:),x\rangle< \langle \tilde{y},\tilde{z}\rangle,$
then add $\hX(i,:)$ to $\mathcal{S}_{i-1}$, and remove $\tilde{z}$ from $\mathcal{S}_{i-1}$; i.e., 
$$\mathcal{S}_i=(\mathcal{S}_{i-1}\setminus\{\tilde z\})\cup\{\hX(i,:)\}.$$ 
If 
$
\max_{x \in \mathcal{S}_{i-1}} \langle \hX(i,:),x\rangle\geq \langle \tilde{y},\tilde{z}\rangle,$
then set $\mathcal{S}_i=\mathcal{S}_{i-1}.$
Iterate this procedure until all $n$ rows of $\hX$ have been considered.  
We show in Proposition~\ref{prop:casnaaccuracy} in the appendix that
$\mathcal{S}_n$ is composed of exactly one row from each
$\widehat{\xi}^{(i)}$. Given the seed set 
$\mathcal{S}_n=\{s_1, s_2, \cdots, s_R\}$, we then
initialize $R$ clusters $\widehat C_1, \widehat C_2, \cdots, \widehat C_R$ via $\widehat C_i=\{s_i\}$ for each $i\in[R].$ 
Lastly, for $i \in [n]$, assign $\hX(i, :)$ to $\widehat C_j$ if 
$$\text{arg}\max_{s \in \mathcal{S}_n} \langle \hX(i, :), s \rangle =s_j.$$ 
As encapsulated in the next lemma, 
this procedure, summarized in Algorithm \ref{alg:main0}, yields an asymptotically perfect clustering of the rows of $\hX$ for HSBM's under mild model assumptions.

\begin{algorithm}[t!]
\begin{algorithmic}
  \STATE Initialize $\mathcal{S}_0$ to be a random sampling of $R$ rows of $\hat{X}$. 
  \FORALL{ $i \in [n]$}
   \STATE Let $\tilde{y}, \tilde{z} \in \mathcal{S}_{i-1}$ be
   such that $\langle \tilde{y}, \tilde{z} \rangle = \max_{y,z \in \mathcal{S}_{i-1}} \langle y, z \rangle$
   \IF{$\max_{x \in \mathcal{S}_{i-1}} \langle \hat{X}(i,:), x \rangle \leq
     \langle \tilde{y}, \tilde{z} \rangle$} 
   \STATE $\mathcal{S}_i = (\mathcal{S}_{i-1} \setminus \{\tilde{z}\}) \cup \{\hat{X}(i,:)\}$
   \ENDIF
  \ENDFOR
  \STATE Denote $\mathcal{S}_n = \{s_1, \dots, s_R \}$
  \STATE Initialize $R$ clusters $\widehat C_1 = \{s_1\}, \dots, \widehat C_R =
  \{s_R\}$ 
\FORALL{ $i \in [n]$}
  \STATE Let $\hat{\tau}(i) = \argmax_{j \in R} \langle \hX(i,:), s_j \rangle$
  \STATE $\widehat C_{\hat{\tau}(i)} = \widehat C_{\hat{\tau}(i)} \cup \{\hX(i,:) \}$
  \ENDFOR
  \caption{Seeded nearest neighbor subspace clustering}
  \label{alg:main0}
\end{algorithmic}
\end{algorithm}

\begin{lemma}
\label{lem:perfect}
Let $G \sim \mathrm{HSBM}(n, \vec{\pi}, \{\vec{\pi}_i\}_{i=1}^{R}, \xi \xi^{\top})$ satisfying Assumptions 1 and 2, and suppose further that $\pi_{\min}:=\min_i\pi(i)>0$.
Then asymptotically almost surely,
\begin{equation*}
\min_{\sigma \in S_R} \sum_{i} \mathbbm{1} \{ \hat{\tau}(i) \not = \sigma(\tau(i)) \} = 0,
\end{equation*}
where $\tau:[n] \rightarrow [R]$ is the true assignment of vertices to subgraphs, and $\hat{\tau}$ is the assignment given by our clustering procedure above. 
\end{lemma}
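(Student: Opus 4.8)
The plan is to run the nearest-neighbour assignment of Algorithm~\ref{alg:main0} on top of the $2\to\infty$ control from Theorem~\ref{thm:minh} and the affinity gap $q-p>0$ of Assumption~1, so that on a high-probability event every inner product that the algorithm compares is separated by a fixed constant and no vertex can be misassigned.

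First I would condition on the event $E_n$ of Theorem~\ref{thm:minh}, which holds asymptotically almost surely: there is a single rotation $W$ with $\|\hX - XW\|_{2\to\infty}\le\varepsilon_n$, where $\varepsilon_n:=Cd^{1/2}\log^2 n/\sqrt n\to 0$. Since $(X,A)\sim\mathrm{RDPG}(F)$ forces $\|X(i,:)\|^2=\langle X(i,:),X(i,:)\rangle\le 1$ for every $i$, we also have $\|\hX(i,:)\|\le 1+\varepsilon_n$. Writing
\begin{equation*}
\langle\hX(i,:),\hX(j,:)\rangle-\langle X(i,:),X(j,:)\rangle=\langle\hX(i,:)-WX(i,:),\,\hX(j,:)\rangle+\langle WX(i,:),\,\hX(j,:)-WX(j,:)\rangle
\end{equation*}
and applying $W^\top W=I$ and Cauchy--Schwarz gives the uniform bilinear estimate
\begin{equation*}
\bigl|\langle\hX(i,:),\hX(j,:)\rangle-\langle X(i,:),X(j,:)\rangle\bigr|\le(2+\varepsilon_n)\varepsilon_n=:\delta_n\longrightarrow 0\qquad\text{for all }i,j\in[n].
\end{equation*}

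Next I would invoke Proposition~\ref{prop:casnaaccuracy}: under Assumptions~1--2 together with $\pi_{\min}>0$ (which ensures a.a.s.\ that every subgraph contains $\Theta(n)$ vertices, so the greedy seeding always has candidates to draw from), on a further a.a.s.\ event the terminal seed set $\mathcal{S}_n=\{s_1,\dots,s_R\}$ consists of exactly one row of $\hX$, say $s_j=\hX(v_j,:)$, from each subgraph. Let $\rho\in S_R$ be the permutation with $\tau(v_j)=\rho(j)$. Fix an arbitrary vertex $i$, set $a=\tau(i)$, and let $j^\star=\rho^{-1}(a)$ be the seed index lying in $i$'s own subgraph. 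Then $v_{j^\star}$ and $i$ both belong to subgraph $a$, so the bilinear estimate and Assumption~1 give $\langle\hX(i,:),s_{j^\star}\rangle\ge\langle X(i,:),X(v_{j^\star},:)\rangle-\delta_n\ge q-\delta_n$, whereas for $j\ne j^\star$ the vertex $v_j$ lies in a different subgraph and $\langle\hX(i,:),s_j\rangle\le\langle X(i,:),X(v_j,:)\rangle+\delta_n\le p+\delta_n$. Since $p<q$ is a fixed gap, as soon as $n$ is large enough that $\delta_n<(q-p)/2$ the unique maximiser over $j$ is $j^\star$, i.e.\ $\hat\tau(i)=\rho^{-1}(\tau(i))$ for every $i$ simultaneously. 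Taking $\sigma=\rho^{-1}\in S_R$ makes the indicator sum in the lemma vanish; since the argument takes place on the intersection of two a.a.s.\ events, the conclusion holds a.a.s.

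I expect the real difficulty to lie not in the perturbation bookkeeping above --- that step is soft, requiring only the \emph{uniform} $2\to\infty$ bound (so that no union bound over the $n$ vertices is needed) and the constant gap $q-p$ --- but in Proposition~\ref{prop:casnaaccuracy} itself, i.e.\ in showing that the greedy swap dynamics of Algorithm~\ref{alg:main0} terminate at a one-seed-per-subgraph configuration. That proof must establish (i) such a configuration is a fixed point of the update, since any newly examined row $\hX(i,:)$ has maximal inner product with the current seeds at least $q-\delta_n$, which exceeds the current within-seed maximum $p+\delta_n$, so no swap is triggered; and (ii) starting from an arbitrary initial $R$-subset, the dynamics actually reach such a configuration before the $n$ rows are exhausted --- exploiting that a.a.s.\ each subgraph supplies $\Theta(n)$ rows, so any ``redundant'' pair of seeds in a common subgraph (which forces the within-seed maximum up to at least $q-\delta_n$) is eventually displaced when a row from a still-unrepresented subgraph is processed. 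Controlling this combinatorial process, and in particular ruling out adversarial vertex orderings, is the delicate point, and it is what is deferred to the appendix.
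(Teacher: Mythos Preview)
Your proposal is correct and follows essentially the same route as the paper: invoke Theorem~\ref{thm:minh} to transfer the affinity gap $q>p$ to the empirical inner products, use Proposition~\ref{prop:casnaaccuracy} to guarantee one seed per subgraph, and then read off $\hat\tau(i)=\rho^{-1}(\tau(i))$ from the inner-product separation. The paper packages the first step slightly differently --- it simply asserts $\hat p<\hat q$ a.a.s.\ rather than writing out your bilinear perturbation bound --- but the content is identical.

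One remark on your closing discussion: the combinatorics of the greedy seed dynamics are cleaner than you anticipate. The paper's proof of Proposition~\ref{prop:casnaaccuracy} establishes two invariants that are insensitive to the processing order: (1) whenever a row from an unrepresented subgraph is examined, it is necessarily swapped in (since the current within-seed maximum is at least $\hat q$ by pigeonhole, while the new row's maximum against the seeds is at most $\hat p$); and (2) a subgraph with exactly one seed never loses it (since that seed's inner product with every other seed is at most $\hat p$, so it cannot be the $\tilde z$ achieving the within-seed maximum when any duplicate pair is present). Together these make the number of represented subgraphs non-decreasing and ensure every subgraph becomes represented once any of its vertices is processed; no adversarial-ordering argument is needed, and $\pi_{\min}>0$ is used only to guarantee that each subgraph is non-empty a.a.s.
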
 


Under only our ``affinity assumption"---namely that $q>p$---$k$-means
cannot provide a provably perfect clustering of vertices. This is a
consequence of the fact that the number of clusters we seek is far
less than the total number of distinct latent positions. As a notional
example, consider a graph with two subgraphs, each of which is an SBM
with two blocks. The representation of such a graph in terms of its
latent positions is illustrated in Figure~\ref{fig:hsbm_cluster}. We
are interested in clustering the vertices into subgraphs, i.e., we
want to assign the points to their corresponding cones (depicted via
the shaded light blue and pink areas). If we denote by $\pi_1$,
$\pi_2$, and $\pi_3$ the fraction of red, green, and blue colored
points, respectively, then a $k$-means clustering of the colored points
into two clusters might, depending on the distance between the points
and $\pi_1, \pi_2, \pi_3$, yield two clusters with cluster centroids
inside the same cone -- thereby assigning vertices from different
subgraphs to the same cluster. That is to say, if the subgraphs' sizes
in Figure~\ref{fig:hsbm_cluster} are sufficiently
unbalanced, then $k$-means clustering could yield a clustering in which
the yellow, green, and blue colored
points are assigned to one cluster, and the red colored points are
assigned to another cluster. 
In short, $k$-means is not a subspace clustering algorithm, and the subspace and affinity assumptions made in our HSBM formulation (Assumptions 1, 2, 3 and 4) render $k$-means suboptimal for uncovering the subgraph structure in our model.
Understanding the structure uncovered by $k$-means in our HSBM setting, while of interest, is beyond the scope of this manuscript.

Note that $p$ being small ensures that the subgraphs of interest,
namely the $H_i$'s, lie in nearly orthogonal subspaces of
$\mathbb{R}^D$.  Our clustering procedure is thus similar in spirit to
the subspace clustering procedure of \cite{vidal2010tutorial}. 

\begin{figure}[tp]
  \centering
  \includegraphics[width=0.4\textwidth]{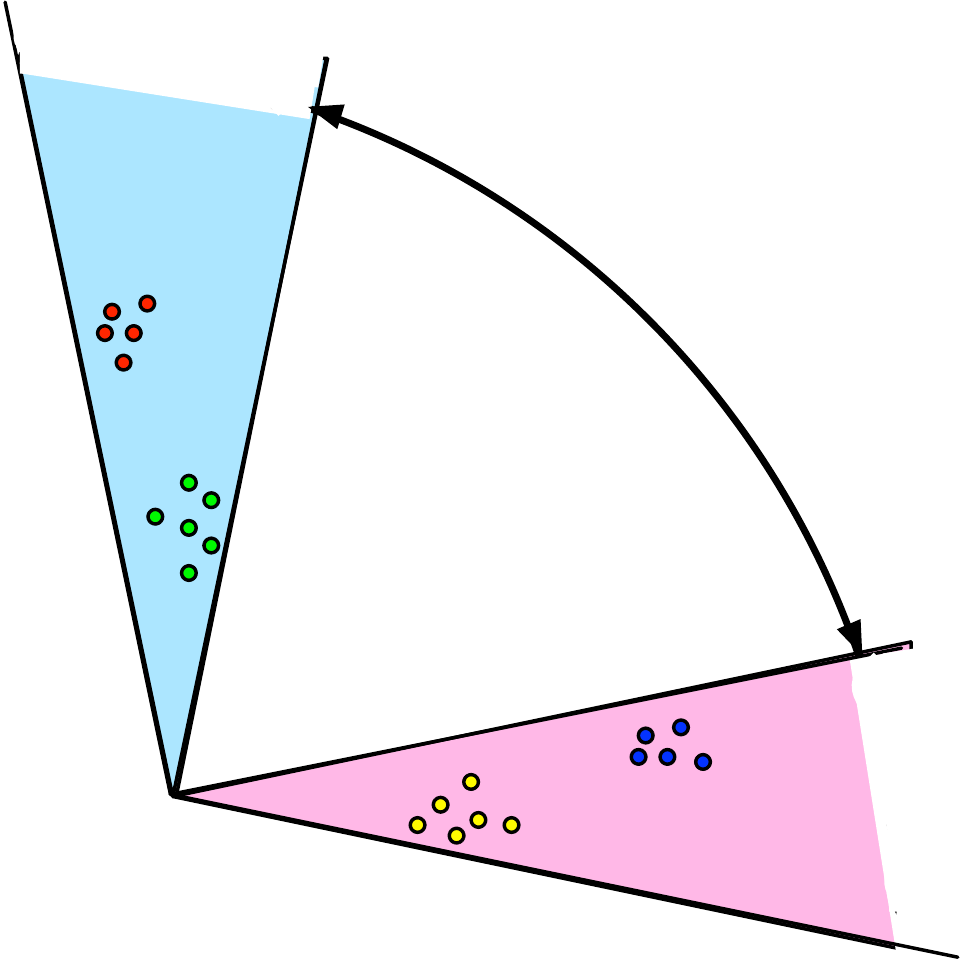}
  \caption{Subgraphs vs. clustering:  Note that if the fraction of points in the pink cone is sufficiently large, $K$-means clustering (with $K=2)$ will not cluster the vertices into the canonical subgraphs. }
  \label{fig:hsbm_cluster}
\end{figure}
\begin{remark}
\label{rem:model}

In what follows, we will assume that $R$, the number of induced SBM
subgraphs in $G$, and $D$ are known a priori.  In practice, however,
we often need to estimate both $D$ (prior to embedding) and $R$ (prior
to clustering).  To estimate $D$, we can use singular value
thresholding \cite{chatterjee2014matrix} to estimate $D$ from a
partial SCREE plot. 
While we can estimate $R$ via traditional
techniques---i.e., measuring the validity of the clustering provided by Algorithm \ref{alg:main0} over
a range of $R$ via silhouette width (see \cite[Chapter 3]{kaufman2009finding})---we propose an alternate estimation procedure tuned to our algorithm.
For each $k=2,3,\ldots,\widehat D$, we run Algorithm \ref{alg:main0} with $R=k$, and repeat this procedure $n_{MC}$ times. 
For each $k$, and each $i=1,2,\ldots,n_{MC}$ compute
$$\phi_i^{(k)}=\max_{s,t\in S_n}\langle s,t\rangle,$$
and compute
$$\phi^{(k)}=\frac{1}{n_{MC}}\sum_{i=1}^{n_{MC}}\phi^{(k)}_i.$$
If the true $R$ is greater than or equal to $k$, then we expect $\phi^{(k)}$ to be small by construction.
If $k$ is bigger than the true $R$, then at least two of the vectors in $S_n$ would lie in the same subspace; i.e., their dot product would be large.
Hence, we would expect the associated $\phi^{(k)}$ to be large.
We employ standard ``elbow-finding'' methodologies \cite{zhu2006automatic} to find the value of $k$ for which $\phi^{(k)}$ goes from small to large, and this $k$ will be our estimate of $R$.
As Algorithm \ref{alg:main0} has running time linear in $n$, with a bounded number of Monte Carlo iterates, this estimation procedure also has running time linear in $n$.
\end{remark}

Post-clustering, a further question of interest is to 
determine which of those induced subgraphs are structurally similar.
We define a motif as a collection of distributionally
``equivalent''---in a sense that we will make precise in Definition \ref{motif}---RDPG graphs. 
An
example of a HSBM graph with $8$ blocks
in $3$ motifs is presented in Figure~\ref{fig:hsbm-example1}.
\begin{figure}[tbp]
  \centering
  \includegraphics[width=0.4\textwidth]{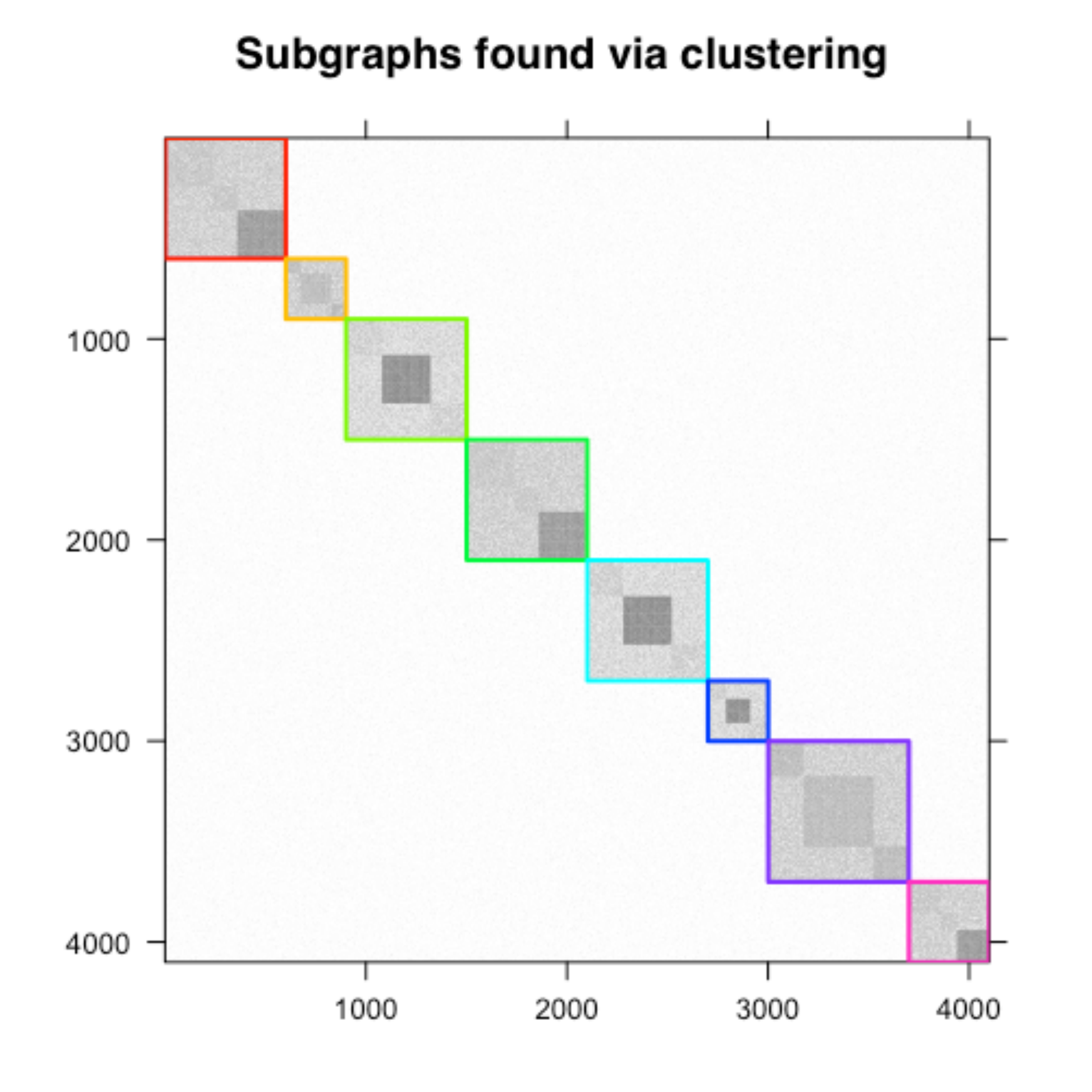}
  \caption{Depiction of the adjacency matrix of a two-level HSBM graph with 3 distinct motifs.  In the above 4100$\times$4100 grid, if an edge exists in $G$ between vertices $i$ and $j$, the the corresponding $i,j$-th cell in the grid is black.  The cell is white if no edge is present.
  The subgraphs corresponding to  motifs are $H_1$, $H_4$, and $H_8$; $H_2$, and $H_7$; and $H_3$, $H_5$, and $H_6$.}
  \label{fig:hsbm-example1}
\end{figure}
More precisely, we define a {\em motif}---namely, an
equivalence class of random graphs---as follows.
\begin{definition}\label{motif}
  Let $(A,X)\sim RDPG(F)$ and $(B,Y)\sim RDPG(G)$.  We say that $A$ and $B$ are of the same {\em
  motif} if there exists a unitary transformation $U$ such
  that $F=G \circ U$.  
\end{definition} 

To detect the presence of motifs among the induced subgraphs $\{\hH_1,\ldots,\hH_R\}$, we adopt the nonparametric test procedure of \cite{tang14:_nonpar}
to determine whether two RDPG graphs have the same
underlying distribution.  The principal result of that work is the following:
\begin{theorem}
  \label{thm:mmd_unbiased_ase}
   Let $(A,X)\sim RDPG(F)$ and $(B,Y)\sim RDPG(G)$ be $d$-dimensional random
   dot product graphs. Consider the hypothesis test
  \begin{align*}
    \phantom{against} \quad H_{0} \colon F = G \circ U \quad 
    \text{against} \quad H_{A} \colon F  \not =  G \circ U.
  \end{align*}
  Denote by $\hX = \{\hX_1, \dots, \hX_n\}$
  and $\hY = \{\hY_1, \dots, \hY_m\}$ the
  adjacency spectral embedding of $A$ and
  $B$, respectively. Define the test statistic $T_{n,m} =
  T_{n,m}(\hX, \hY) $ as follows:
  \begin{align}
    \label{eq:11}
    &T_{n,m}(\hX, \hY) = \frac{1}{n(n-1)}
      \sum_{j \not = i}
      \kappa(\hX_i,
      \hX_j)\notag \\ 
      &\hspace{4mm} -\frac{2}{mn} \sum_{i=1}^{n}
      \sum_{k=1}^{m} \kappa(\hX_i,
      \hY_k) + \frac{1}{m(m-1)} \sum_{l \not = k} \kappa(\hY_k, \hY_l)
  \end{align}
  where $\kappa$ is a radial basis kernel, e.g., $\kappa = \exp(-
  \|\cdot - \cdot\|^2/\sigma^2)$. Suppose that $m, n \rightarrow
  \infty$ and $m/(m+n) \rightarrow \rho \in (0,1)$.  Then under the
  null hypothesis of $F = G \circ U$,
  \begin{equation}
\label{eq:conv_mmdXhat_null}
  |T_{n,m}(\hX, \hY) -
    T_{n,m}(X, Y W)| \overset{\mathrm{a.s.}}{\longrightarrow} 0 
  \end{equation}
  and $|T_{n,m}(X, Y W)|\rightarrow 0$ as $n,m\rightarrow\infty$,
  where $W$ is any orthogonal matrix such that $F = G \circ W$. 
  In addition, under the
  alternative hypothesis of $F\neq G\circ U$, there exists an
  orthogonal matrix $W\in\mathbb{R}^{d\times d}$, depending on $F$ and
  $G$ but independent of $m$ and $n$, such that
\begin{equation}
\label{eq:conv_mmdXhat_alt}
    |T_{n,m}(\hX, \hY) -
    T_{n,m}(X, Y W)| \overset{\mathrm{a.s.}}{\longrightarrow} 0,
  \end{equation}
    and $|T_{n,m}(X, Y W)|\rightarrow c>0$ as $n,m\rightarrow\infty$.
\end{theorem}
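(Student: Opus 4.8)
The plan is to recognize $T_{n,m}$ as the unbiased $U$-statistic estimator of the squared maximum mean discrepancy (MMD) for the kernel $\kappa$, and to split the analysis into a \emph{population step} (identify the almost-sure limit of $T_{n,m}(X,YW)$) and an \emph{estimation step} (show that substituting the adjacency spectral embeddings $\hX,\hY$ for the true latent positions $X,Y$ changes the statistic by a vanishing amount, for a suitable rotation $W$).

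\emph{Population step.} For a fixed orthogonal $W$, $T_{n,m}(X,YW)$ is the unbiased $U$-statistic estimator of $\mathrm{MMD}_\kappa^2(F,G\circ W)$, built from the bounded kernel $\kappa\in(0,1]$ (here $G\circ W$ denotes the common distribution of the rows of $YW$). Since $m/(m+n)\to\rho\in(0,1)$ forces $m,n\to\infty$, the strong law of large numbers for $U$-statistics gives
\[
T_{n,m}(X,YW)\ \xrightarrow{\ \mathrm{a.s.}\ }\ \mathrm{MMD}_\kappa^2\!\big(F,\,G\circ W\big).
\]
Because a Gaussian radial basis kernel is characteristic on $\R^d$, $\mathrm{MMD}_\kappa$ is a genuine metric on Borel probability measures, so this limit vanishes exactly when $F=G\circ W$. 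Under $H_0$, taking $W$ to be any orthogonal matrix witnessing $F=G\circ W$ gives limit $0$; under $H_A$, $F\neq G\circ U$ for \emph{every} orthogonal $U$, so for whichever rotation $W$ emerges from the estimation step the limit is a constant $c=\mathrm{MMD}_\kappa^2(F,G\circ W)>0$ depending only on $F,G,W$ and not on $m,n$. This already yields the ``$|T_{n,m}(X,YW)|\to 0$'' and ``$|T_{n,m}(X,YW)|\to c>0$'' assertions.

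\emph{Estimation step.} The two tools are Lipschitz continuity of $\kappa$ (smooth with bounded gradient) and consistency of the adjacency spectral embedding. By Theorem~\ref{thm:minh} there are orthogonal matrices $W_1,W_2$ with $\|\hX-XW_1\|_{2\to\infty}$ and $\|\hY-YW_2\|_{2\to\infty}$ of order $d^{1/2}\log^2 n/\sqrt n$ (resp.\ $\log^2 m/\sqrt m$); since the failure probabilities in Theorem~\ref{thm:minh} are summable in $n$, Borel--Cantelli makes these bounds hold almost surely for all large $n,m$. Because $\kappa(u,v)$ depends on $u,v$ only through $\|u-v\|$, $T_{n,m}$ is invariant under a common orthogonal transformation of both arguments, so $T_{n,m}(\hX,\hY)=T_{n,m}(\hX W_1^{\top},\hY W_1^{\top})$, with $\hX W_1^{\top}\approx X$ and $\hY W_1^{\top}\approx Y\,W_2W_1^{\top}$ row by row. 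Writing $L$ for a Lipschitz constant of $\kappa$ and expanding the three sums in $T_{n,m}$, the triangle inequality followed by Cauchy--Schwarz gives bounds such as
\[
\Big|\tfrac{1}{n(n-1)}\!\sum_{i\neq j}\!\big(\kappa((\hX W_1^{\top})_i,(\hX W_1^{\top})_j)-\kappa(X_i,X_j)\big)\Big|\ \le\ \tfrac{2L}{n}\sum_{i=1}^n\big\|(\hX W_1^{\top})_i-X_i\big\|\ \le\ 2L\,\|\hX-XW_1\|_{2\to\infty}\ \to\ 0,
\]
and likewise for the cross term and the $\hY$ term; summing, $|T_{n,m}(\hX,\hY)-T_{n,m}(X,YW)|\to 0$ almost surely with $W:=W_2W_1^{\top}$. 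Together with the population step this delivers all four convergences, provided this $W$ can be taken with $F=G\circ W$ under $H_0$ and independent of $m,n$ under $H_A$. To secure that, one passes to the population second-moment matrices $\Delta_F=\e[X_1X_1^{\top}]$ and $\Delta_G=\e[Y_1Y_1^{\top}]$ and argues, via Davis--Kahan-type perturbation bounds, that any admissible sequence of ASE rotations converges to a deterministic rotation aligning the leading eigenspaces of $XX^{\top}$, $YY^{\top}$ to those of $\Delta_F$, $\Delta_G$; under $H_0$ the relation $\Delta_F=W_0\,\Delta_G\,W_0^{\top}$, for some $W_0$ with $F=G\circ W_0$, then forces the limiting $W$ to satisfy $F=G\circ W$, while under $H_A$ that limiting $W$ is a fixed rotation with $F\neq G\circ W$.

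\emph{Main obstacle.} I expect this last piece of orthogonal bookkeeping to be the crux. The ASE alignment rotations are random and depend on $n$ and $m$, whereas the theorem insists on a single $W$ independent of $m,n$; reconciling the two forces one to track how the sample eigenvectors of $XX^{\top}$ (resp.\ $YY^{\top}$) concentrate about the population eigenvectors of $\Delta_F$ (resp.\ $\Delta_G$), where eigenvalue multiplicities mean one must work with whole eigenspaces and contend with the attendant block/sign indeterminacies — and one must check that the cross term $\kappa(\hX_i,\hY_k)$, which (unlike the two within-sample sums) is \emph{not} invariant under independent orthogonal re-coordinatizations of $\hX$ and $\hY$, is nonetheless controlled. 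By contrast, the $U$-statistic strong law, the characteristic-kernel metric property, and the Lipschitz/Cauchy--Schwarz aggregation of the $O(n^2)$ kernel differences down to an order-$\|\hX-XW_1\|_{2\to\infty}$ bound are routine.
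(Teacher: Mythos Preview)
This theorem is not proved in the paper; it is quoted as ``the principal result'' of \cite{tang14:_nonpar}, so there is no in-paper argument to compare your proposal against.

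That said, your sketch is a faithful outline of how such a result is established. The split into a population step (SLLN for bounded-kernel $U$-statistics plus the characteristic property of the Gaussian kernel, giving $T_{n,m}(X,YW)\to\mathrm{MMD}^2_\kappa(F,G\circ W)$) and an estimation step (Lipschitz continuity of $\kappa$ together with the row-wise ASE bound of Theorem~\ref{thm:minh}, upgraded to an almost-sure statement via Borel--Cantelli) is the right architecture, and your reduction of the $O(n^2)$ kernel perturbations to a single $2\to\infty$ bound is correct. You have also correctly isolated the only nontrivial point: the theorem asks for a \emph{fixed} $W$, whereas the alignments $W_1,W_2$ supplied by Theorem~\ref{thm:minh} are random and sample-dependent, and---since only the two within-sample sums are invariant under separate rotations of $\hX$ and $\hY$---the cross term forces one to control $W_2W_1^{\top}$ itself. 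Your proposed resolution (concentration of $n^{-1}X^{\top}X$ about $\Delta_F$ and Davis--Kahan at the eigenspace level to pin down a deterministic limiting rotation, accounting for block indeterminacies from repeated eigenvalues) is the correct strategy; the details are more tedious than a one-line appeal, but there is no conceptual gap in your plan.
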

Theorem~\ref{thm:mmd_unbiased_ase} allows us to formulate the problem
of detecting when two graphs $A$ and $B$ belong to the same motif as a
hypothesis test. Furthermore, under appropriate conditions on $\kappa$
(conditions satisfied when $\kappa$ is a Gaussian kernel with
bandwidth $\sigma^2$ for fixed $\sigma$), the hypothesis test is
consistent for any two arbitrary but fixed distributions $F$ and $G$,
i.e., $T_{n,m}(X, Y) \rightarrow 0$ as $n, m \rightarrow \infty$ if
and only if $F = G$. 
\begin{figure}[t!]
  \centering
  \includegraphics[width=0.5\textwidth]{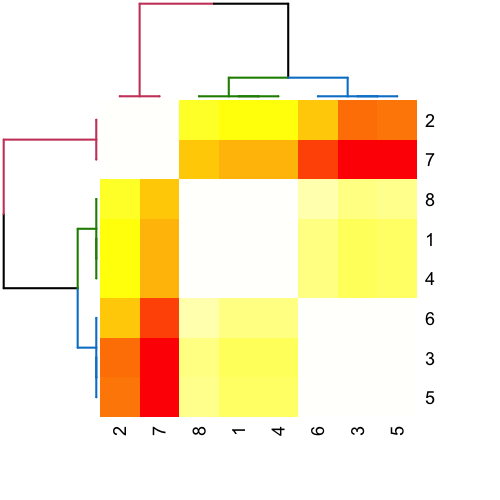}
\caption{Heatmap depicting the dissimilarity matrix $\widehat S$ produced by Algorithm \ref{alg:main} for the 2-level HSBM depicted in Figure \ref{fig:hsbm-example1}. We apply hierarchical clustering to $\widehat S$ (with the resulting dendrogram clustering displayed) demonstrating the which recover the three distinct motifs. }
\label{fig:fignewton}
\end{figure}
We are presently working to extend results on the consistency of
adjacency spectral embedding and two-sample hypothesis testing (i.e.,
Theorem \ref{thm:mmd_unbiased_ase} and \cite{tang14:_semipar}) from
the current setting of random dot product graphs to more general
random graph models, with particular attention to scale-free and
small-world graphs. 
However, the extension of these techniques to more
general random graphs is beset by intrinsic difficulties.  For
example, even extending motif detection to general latent position
random graphs is confounded by the non-identifiability inherent to
graphon estimation. Complicating matters further, there are few random
graph models that are known to admit parsimonious sufficient
statistics suitable for subsequent classical estimation procedures.

\section{Detecting hierarchical structure in the HSBM}
\label{sec:HSBM}
Combining the above inference procedures, our algorithm, as depicted in Algorithm \ref{alg:main}, proceeds as follows.  
We first cluster the adjacency spectral embedding of the graph $G$ to obtain the first-order, large-scale block memberships.  
We then employ the nonparametric test procedure outlined in \cite{tang14:_nonpar} to determine similar induced subgraphs (motifs) associated with these blocks.  
We iterate this process to obtain increasingly refined estimates of the overall graph structure.  
In Step 6 of Algorithm \ref{alg:main}, we recurse on a representative subgraph (e.g., the largest subgraph) within each motif; embedding the subgraph into $\mathbb{R}^d$ (not $\mathbb{R}^D$) as Step 1 of Algorithm \ref{alg:main}.
Ideally, we would leverage the full collection of subgraphs from each motif in this recursion step.
However, the subgraphs within a motif may be of differing orders and meaningfully averaging or aligning them (see \cite{lsgm}) requires novel regularization which, though interesting, is beyond the scope of the present manuscript.

Before presenting our main theorem in the 2-level setting, Theorem \ref{thm:1}, we illustrate the steps of
our method in the analysis of the 2-level synthetic HSBM graph depicted in
Figure \ref{fig:hsbm-example1}.  
The graph has 4100
vertices belonging to 8 different blocks of size $\vec n=(300,600,600,600,700,600,300,400)$ with three distinct motifs. 
The block probability matrices corresponding to these motifs are given by
\begin{equation*}
B_1 = \begin{bmatrix} 0.3 & 0.25 &0.25\\ 0.25 & 0.3 & 0.25 \\ 0.25 & 0.25 & 0.7 \end{bmatrix}; \quad B_2 = \begin{bmatrix} 0.4 & 0.25 & 0.25 \\ 0.25 & 0.4 & 0.25 \\ 0.25 & 0.25 & 0.4
\end{bmatrix}; 
\end{equation*}
\begin{equation*}
\quad B_3 = \begin{bmatrix} 0.25 & 0.2 & 0.2\\ 0.2 & 0.8 & 0.2\\
0.2& 0.2& 0.25 
\end{bmatrix},
\end{equation*}
and the inter-block edge probability is bounded by $p=0.01$.

The
algorithm does indeed detect three motifs, as depicted in Figure \ref{fig:fignewton}.  
The figure presents a heat map depiction of $\widehat S$, and the
similarity of the communities is represented on the spectrum between
white and red, with white representing highly similar communities and
red representing highly dissimilar communities.  
From the figure, we correctly see there are three distinct motif communities, $\{\hH_3,\hH_7\}$, $\{\hH_1,\hH_2,\hH_8\}$, and $\{\hH_4,\hH_5,
\hH_6\}$, corresponding to stochastic
blockmodels with the following block probability matrices
\begin{equation*}
\hat{B}_1 = \begin{bmatrix} 0.27 & 0.25 \\ 0.25 & 0.72 \end{bmatrix}; \quad 
\hat{B}_2 = \begin{bmatrix} 0.41 & 0.27 & 0.26 \\ 0.27 & 0.40 & 0.25 \\ 0.26 & 0.25 & 0.41
\end{bmatrix}.
\end{equation*}
\begin{equation*}
\hat{B}_3 =\begin{bmatrix} 0.22 & 0.20 \\ 0.20 & 0.80 
\end{bmatrix}.
\end{equation*}
\tikzset{
  treenode/.style = {align=center, inner sep=0pt, text centered,
    font=\sffamily},
  arn_x/.style = {treenode, circle, cyan, font=\sffamily\bfseries, draw=cyan,
    text width=3.5em, very thick},
  arn_y/.style = {treenode, circle, orange, font=\sffamily\bfseries, draw=orange,
    text width=3.5em, very thick},
  arn_g/.style = {treenode, circle, green, font=\sffamily\bfseries, draw=green,
    text width=2.5em, very thick},
  arn_r/.style = {treenode, circle, red, draw=red, 
    text width=2.5em, very thick},
  arn_b/.style = {treenode, circle, blue, draw=blue,
    text width=2.5em, very thick}
}
\begin{figure*}[t!]
\centering
\begin{tikzpicture}[level/.style={sibling distance=60mm/#1}]
\node [circle,draw] (z){$G$}
  child {node [circle,draw, arn_y] (a) {$G^{(1)}_1$}
    child {node [circle,draw, arn_g] (b) {$G_{1}^{(2)}$}
      child {node {$\vdots$}
      } 
      child {node {$\vdots$}}
    }
    child {node [circle,draw,arn_b] (g) {$G_{2}^{(2)}$}
    }
  }
  child {node [circle,draw, arn_x] (j) {$G_{2}^{(1)}$}
    child {node [circle,draw, arn_r] (k) {$G_{3}^{(2)}$}
      child {node {$\vdots$}}
      child {node {$\vdots$}}
    }
  child {node [circle,draw, arn_g] (l) {$G_{4}^{(2)}$}
    child {node {$\vdots$}}
    child {node (c){$\vdots$}
    }
  }
  child {node [circle,draw, arn_r] (s) {$G_{5}^{(2)}$}
    child {node {$\vdots$}}
    child {node (c){$\vdots$}
    }
  }
}
  child {node [circle,draw, arn_y] (m) {$G_3^{(1)}$}
    child {node [circle,draw, arn_b] (n) {$G_{6}^{(2)}$}
     }
  child {node [circle,draw, arn_g] (r) {$G_{7}^{(2)}$}
    child {node {$\vdots$}}
    child {node (c){$\vdots$}
     }
    }
  };

\end{tikzpicture}
\caption{Notional depiction of a general hierarchical graph
  structure. The colored nodes in the first and second level of the
  tree (below the root node) correspond to induced subgraphs and associated motifs.}
\end{figure*}

We note that even though the vertices in the HSBM are perfectly clustered into the subgraphs (i.e., for $i\in[8]$, $\hH_i=H_i$ for all $i$), the actual $B$'s differ slightly from their estimates,
but this difference is quite small. 

The performance of Algorithm \ref{alg:main} in this simulation setting
can be seen as a consequence of Theorem \ref{thm:1} below, in which we
prove that under modest assumptions on an underlying 2-level
hierarchical stochastic block model, Algorithm \ref{fig:hsbm-example1}
yields a consistent estimate of the dissimilarity matrix
$S:=[T_{n_i,n_j}(H_i,H_j)].$

\begin{theorem}
\label{thm:1}
Suppose $G$ is a hierarchical stochastic blockmodel satisfying Assumptions 1 and 2. Suppose that
$R$ is fixed and the $\{H_r\}$ correspond to $M$ different motifs,
i.e., the set $\{\xi_1, \xi_2, \dots, \xi_R\}$ has $M \leq R$
distinct elements. Given the assumptions of Theorem \ref{thm:minh} and Lemma \ref{lem:perfect}, 
the procedure in
Algorithm~\ref{alg:main} yields perfect estimates $\hH_1=H_1, \cdots, \hH_R=H_R$ of
$H_1, \cdots, H_R$ and $\widehat{S}$ of $S$ asymptotically almost surely.
\end{theorem}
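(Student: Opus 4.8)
The plan is to decompose the claim into three layers of approximation, corresponding to the three places where randomness enters Algorithm~\ref{alg:main}: the global embedding into $\mathbb{R}^D$, the subgraph clustering, and the pairwise comparisons of induced subgraphs. First I would invoke Theorem~\ref{thm:minh} (with $d$ there replaced by $D$) to assert that, asymptotically almost surely, there is a rotation $W$ with $\|\hX - XW\|_{2\to\infty} = O(D^{1/2}\log^2 n / \sqrt{n})$. Since $G$ has a fixed number $R$ of subgraphs and hence $D = Rd$ is fixed, this bound is $o(1)$, so every row of $\hX$ lies within a vanishing neighborhood of its true latent position (up to the common rotation $W$). This is precisely the hypothesis under which Lemma~\ref{lem:perfect} applies: the seeded nearest-neighbor subspace clustering of Algorithm~\ref{alg:main0}, run on the rows of $\hX$, agrees with the true subgraph assignment $\tau$ up to a permutation $\sigma \in S_R$, asymptotically almost surely. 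Intersecting the two a.a.s.\ events (which preserves a.a.s., since each has probability $\to 1$), we conclude that $\hH_1, \dots, \hH_R$ equal $H_1, \dots, H_R$ as \emph{vertex sets} (after relabeling), so Step~2 of Algorithm~\ref{alg:main} recovers the subgraphs exactly.

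Next I would propagate this perfect recovery to Step~3. Once $\hH_i = H_i$ exactly, the adjacency matrix $A_{\hH_i}$ fed into the per-subgraph embedding is literally the adjacency matrix of the SBM subgraph $H_i$; there is no residual error from the first-level clustering. By Assumption~2, each $H_i$ is itself a $d$-dimensional RDPG (an SBM with $K$ blocks whose block probability matrix has rank $d$), so $\hX_{\hH_i}$ is a genuine $d$-dimensional adjacency spectral embedding of a random dot product graph, and we may apply Theorem~\ref{thm:mmd_unbiased_ase} to each pair $(H_r, H_s)$. For a pair in the same motif---i.e., $\xi_r = \xi_s$ (up to orthogonal transformation), which is exactly the hypothesis $F_r = F_s \circ U$---the theorem gives $T_{\hat n_r, \hat n_s}(\hX_{\hH_r}, \hX_{\hH_s}) \to 0$ a.s.; for a pair in different motifs it gives convergence to a positive constant $c_{rs} > 0$. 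Since there are only finitely many pairs (at most $\binom{R}{2}$), a union bound over this finite collection shows that, asymptotically almost surely, $\widehat S$ is entrywise within any prescribed tolerance of the population matrix $S = [T_{n_r,n_s}(H_r,H_s)]$, with within-motif entries near $0$ and between-motif entries near the respective $c_{rs}$; this is the desired consistency of $\widehat S$.

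The main obstacle I anticipate is bookkeeping the interaction between the \emph{asymptotically almost surely} quantifier (with its specific $\eta \in (n^{-c}, 1/2)$ formulation from the excerpt) and the fact that the subgraph orders $\hat n_r = |H_r|$ are themselves random (concentrating around $n \pi(r)$). Theorem~\ref{thm:mmd_unbiased_ase} is stated for $m, n \to \infty$ with $m/(m+n) \to \rho \in (0,1)$; I would need $\pi_{\min} > 0$ (assumed in Lemma~\ref{lem:perfect}) together with a Chernoff bound on the multinomial subgraph sizes to guarantee that, a.a.s., every $\hat n_r \geq n\pi(r)/2 \to \infty$ and every ratio $\hat n_r/(\hat n_r + \hat n_s)$ stays bounded away from $0$ and $1$, so that the hypotheses of Theorem~\ref{thm:mmd_unbiased_ase} hold conditionally on the (a.a.s.) event of perfect clustering. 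A secondary, more cosmetic difficulty is that Theorem~\ref{thm:mmd_unbiased_ase} is phrased as an almost-sure limit rather than a finite-$n$ a.a.s.\ statement; I would either appeal to the finite-sample concentration bounds underlying its proof in \cite{tang14:_nonpar} or simply note that almost-sure convergence of a finite family implies the a.a.s.\ statement we need. Everything else---the rotation matrices $W$ at each level, the permutation $\sigma$, the passage from Frobenius-type to row-wise control---is handled cleanly by Theorems~\ref{thm:minh} and~\ref{thm:mmd_unbiased_ase} and Lemma~\ref{lem:perfect} as black boxes.
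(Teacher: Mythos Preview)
Your proposal is correct and follows essentially the same route as the paper: invoke Lemma~\ref{lem:perfect} (itself resting on Theorem~\ref{thm:minh}) to get $\hH_i = H_i$ exactly a.a.s., and then apply Theorem~\ref{thm:mmd_unbiased_ase} to each pair of recovered subgraphs to conclude that $\widehat S$ is entrywise consistent for $S$. The paper's own proof is a three-sentence version of exactly this argument; your additional bookkeeping on subgraph sizes via $\pi_{\min}>0$ and the a.s.\ versus a.a.s.\ reconciliation are legitimate points of rigor that the paper simply elides.
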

\begin{proof}
By Lemma \ref{lem:perfect}, the clustering provided by Step 2 of Algorithm \ref{alg:main} will be perfect asymptotically almost surely.
Given this, $\hH_1=H_1, \cdots, \hH_R=H_R$ are consistent estimates of
$H_1, \cdots, H_R$.
Theorem \ref{thm:mmd_unbiased_ase} then implies that $\widehat S$ yields a consistent estimate of $S$; i.e.; for each $i,j$, $|\widehat S(i,j)-S(i,j)|\rightarrow 0$ as $n\rightarrow\infty$.
\end{proof}

With assumptions as in Theorem \ref{thm:1}, any level $\gamma$ test using $S_{ij}$ corresponds to an at most level $\gamma + 2\eta$ test using $\widehat{S}_{ij}$. 
In this case, asymptotically almost surely, the $p$-values of entries of $\widehat S$ corresponding to different motifs will all converge to $0$ as $n\pi_{\min}\rightarrow\infty$, and the $p$-values of entries of $\widehat S$ corresponding to the same motifs will all be bounded away from $0$ as $n\pi_{\min}\rightarrow\infty$.
This immediately leads to the following corollary.
\begin{corollary}
With assumptions as in Theorem \ref{thm:1}, clustering the matrix of $p$-values associated with $\widehat S$ yields a consistent clustering of $\{\hH_i\}_{i=1}^R$ into motifs.
\end{corollary}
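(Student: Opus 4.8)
The plan is to reduce the statement to the two-scale dichotomy in the entries of $\widehat{S}$ already recorded, and then to check that a generic agglomerative clustering of the associated $p$-value matrix respects the block structure that dichotomy induces. Throughout I would work on the asymptotically almost sure event supplied by Theorem~\ref{thm:1}, on which $\hH_i = H_i$ for every $i \in [R]$ and $|\widehat{S}(i,j) - S(i,j)| \to 0$ for every pair $i,j$; since $R$ is fixed there are only finitely many such pairs, so a union bound over pairs is harmless and all of the convergence statements below may be taken to hold simultaneously.

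First I would classify the pairs. Applying Theorem~\ref{thm:mmd_unbiased_ase} to $H_i$ and $H_j$ (legitimate because $\hH_i = H_i$ and $\hH_j = H_j$ a.a.s.), if $H_i$ and $H_j$ belong to the same motif---equivalently, if $\xi_i$ and $\xi_j$ agree up to a unitary transformation---then $\widehat{S}(i,j) = T_{n_i,n_j}(\hX_{\hH_i}, \hX_{\hH_j}) \to 0$, whereas if they belong to distinct motifs then $\widehat{S}(i,j) \to c_{ij}$ for a constant $c_{ij} > 0$ depending only on the two motifs. Passing to $p$-values: the $p$-value attached to $\widehat{S}(i,j)$ is, asymptotically, a nonincreasing function of the suitably normalized statistic, whose limiting null law is the non-degenerate reference distribution underlying Theorem~\ref{thm:mmd_unbiased_ase}; hence for same-motif pairs the $p$-value has a non-degenerate limit bounded away from $0$, while for distinct-motif pairs the $p$-value tends to $0$. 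Taking the minimum over the finitely many same-motif pairs and the maximum over the finitely many distinct-motif pairs, a.a.s. there is a threshold $\gamma$---which one may in fact let vanish slowly---separating every same-motif $p$-value (above $\gamma$) from every distinct-motif $p$-value (below $\gamma$).

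Next I would analyze the clustering step itself. Consider the graph $\Gamma$ on vertex set $[R]$ whose edges are exactly the pairs with $p$-value at least $\gamma$; by the previous paragraph, a.a.s. $\Gamma$ contains every same-motif edge and no distinct-motif edge. Since ``belonging to the same motif'' is an equivalence relation on $\{H_1, \ldots, H_R\}$, the connected components of $\Gamma$ are then precisely the $M$ motif classes. Single-linkage (or average-linkage) hierarchical clustering of the $p$-value dissimilarities, cut at level $\gamma$, returns exactly these components; more generally any clustering of the rows of the $p$-value matrix that resolves the gap at $\gamma$ does the same. Hence a.a.s. the output partition of $\{\hH_i\}_{i=1}^R$ coincides with the motif partition, which is the assertion of the corollary.

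The substantive content is entirely inherited from the inputs---Theorem~\ref{thm:mmd_unbiased_ase} and Theorem~\ref{thm:1}---so the only residual obstacle is the translation from convergence of the test statistics to the behavior of the $p$-values: this needs the existence and non-degeneracy of the limiting null distribution of the normalized MMD statistic, so that same-motif $p$-values do not also collapse to $0$, together with some control on the rate at which distinct-motif $p$-values vanish if one wishes the cut level $\gamma$ to be data-driven rather than a fixed constant. The combinatorial step---verifying that agglomerative clustering of a dissimilarity matrix with this two-scale structure recovers the equivalence classes---is routine once transitivity of the motif relation is noted.
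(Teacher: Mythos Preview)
Your proposal is correct and follows essentially the same approach as the paper. The paper's argument is simply the paragraph preceding the corollary: it asserts that same-motif $p$-values stay bounded away from $0$ while different-motif $p$-values tend to $0$, and says this ``immediately leads'' to the result; you reproduce exactly this dichotomy and then spell out the clustering step (threshold $\gamma$, graph $\Gamma$, connected components) that the paper leaves implicit.
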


Theorem \ref{thm:1} provides a proof of concept inference result for
our algorithm for graphs with simple hierarchical structure, and we
will next extend our setting and theory to a more complex
hierarchical setting.



\section{Multilevel HSBM}
\label{sec:multilevel}

In many real data applications (see for example, Section
\ref{sec:data}), the hierarchical structure of the graph extends
beyond two levels.  We now extend the HSBM model of Definition
\ref{def:HSBM}---which, for ease of exposition, was initially
presented in the 2-level hierarchical setting---to incorporate more
general hierarchical structure.  With the HSBM of Definition
\ref{def:HSBM} being a 2-level HSBM (or {\it 2-HSBM}), we inductively
define an $\ell$-level HSBM (or $\ell${\it -HSBM}) for
$\ell\in\mathbb{Z}\geq 3$ as follows.
\begin{definition}[$\ell$-level Hierarchical stochastic blockmodel $\ell${\it -HSBM}]
We say that $(X,A)\sim\mathrm{RDPG}(F^{(\ell)})$ is an instantiation of a $D^{(\ell)}$-dimensional $\ell$-level HSBM if the distribution $F^{(\ell)}$ can be written as 
\begin{align}
\label{mixture}
F^{(\ell)}=\sum_{i=1}^{R^{(\ell)}}\pi^{(\ell)}(i)F^{(\ell)}_i,
\end{align}
where 
\begin{itemize}
\item[i.] $\vec{\pi}^{(\ell)}\in(0,1)^{R^{(\ell)}}$ with $\sum_i \pi^{(\ell)}(i)=1$; 
\item[ii.]
$F^{(\ell)}$ has support on the rows of 
$[\xi^{(\ell)}_1|\xi^{(\ell)}_2|\cdots|\xi^{(\ell)}_{R^{(\ell)}}]^T,$ where for each $i\in[R^{(\ell)}]$, $F^{(\ell)}_i$ has support on the rows of $(\xi^{(\ell)}_i)^T$.
\item[iii.]
For each $i\in[R^{(\ell)}]$, an RDPG graph drawn according to $(Y,B)\sim$RDPG($F^{(\ell)}_i$) is an $h$-level HSBM with $h\leq\ell-1$ with at least one such $h=\ell-1$.
\end{itemize}  
\end{definition}

Simply stated, an $\ell$-level HSBM graph is an RDPG (in fact, it is an SBM with potentially many more than $R^{(\ell)}$ blocks) for which the
vertex set can be partitioned into $R^{(\ell)}$ subgraphs---where
$(\xi^{(\ell)}_i)^T\in\mathbb{R}^{K_i^{(\ell)}\times D^{(\ell)}}$ denotes the matrix whose rows are the 
latent positions characterizing the block probability matrix for subgraph $i$---each of which is itself an $h$-level HSBM with $h\leq\ell-1$ with at least one such $h=\ell-1$.

As in the 2-level case, to ease notation and facilitate theoretical developments, in this paper we will make the following assumptions on the more general $\ell$-level HSBM.  Letting $(X,A)\sim\mathrm{RDPG}(F^{(\ell)})$ be an instantiation of a $D^{(\ell)}$-dimensional $\ell$-level HSBM, we further assume:\\
{\bf Assumption 3:} (Multilevel affinity) 
For each $k\in\{2,3,\ldots,\ell\}$ the constants
\begin{align}
q^{(k)}&:=\min_{i,j,h} \langle \xi^{(k)}_i(:,j), \xi^{(k)}_i(:,h) \rangle,\\
p^{(k)}&:=\max_{i\neq j}\max_{m,h}\big\langle \xi^{(k)}_i(:,h),\xi^{(k)}_j(:,m)\big\rangle
\end{align}
satisfy $p^{(k)}<q^{(k)}.$\\
{\bf Assumption 4:} (Subspace structure) For each $i\in[R^{(\ell)}]$, $F^{(\ell)}_i$ has support on the rows of $(\xi^{(\ell)}_i)^T$, which collectively satisfy
\begin{align}
\label{eq:ZA}
 \begin{bmatrix}
       (\xi^{(\ell)}_1)^T\\
       (\xi^{(\ell)}_2)^T\\
       \vdots\\
       (\xi^{(\ell)}_{R^{\ell}})^T
     \end{bmatrix}=\underbrace{\begin{bmatrix}
       (\xi^{(\ell-1)}_1)^T & 0 & \cdots&0\\
      0 & (\xi^{(\ell-1)}_2)^T          & \cdots &0\\
       \vdots           & \vdots& \ddots&\vdots\\
       0&0&\cdots&(\xi^{(\ell-1)}_{R^{(\ell)}})^T
     \end{bmatrix}}_{Z^{(\ell)}}+\mathcal{A}^{(\ell)},
     \end{align}
     where $Z^{(\ell)}\circ \mathcal{A}^{(\ell)}=0$ ($\circ$ again being the Hadamard product).
     For each $i\in[R^{(\ell)}]$, an RDPG graph drawn according to $(Y,B)\sim$RDPG($F^{(\ell-1)}_i$) where $F^{(\ell-1)}_i$ has support on the rows of $(\xi^{(\ell-1)}_i\in\mathbb{R}^{K^{(\ell-1)}_i\times D^{(\ell-1)}_i})^T$ and is an at most $\ell-1$-level HSBM (with at least one subgraph being an $(\ell-1)$-level HSBM).
     In addition, we assume similar subspace structure recursively at every level of the hierarchy.

\begin{remark}
\label{rem:lowerdimensional}
As was the case with Assumption 2, Assumption 4 plays a crucial role in our algorithmic development.  
Indeed, under this assumption we can view successive levels of the hierarchical HSBM as RDPG's in successively smaller dimensions.
Indeed, it is these $(Y,B)\sim$RDPG($F^{(\ell-1)}_i$) which we embed in Step 3 of Algorithm \ref{alg:main}, and we embed them into the smaller $\mathbb{R}^{D^{(\ell-1)}_i}$ rather than $\mathbb{R}^{D^{(\ell)}}$.
For example, suppose $G$ is an $\ell$-level HSBM, and $G$ has $R$ subgraphs each of which is an $(\ell-1)$-level HSBM.  Furthermore suppose that each of these subgraphs itself has $R$ subgraphs each of which is an $(\ell-2)$-level HSBM, and so on. 
If the SBM's at the lowest level are all $d$-dimensional, then $G$ can be viewed as an $R^{\ell-1}d$-dimensional RDPG.
In practice, to avoid this curse of dimensionality, we could embed each subgraph at level $k\leq\ell$ into $R$ dimensions and still practically uncover the subgraph (but not the motif!) structure.
This assumption also reinforces the affinity structure of the subgraphs, which is a key component of our theoretical developments.
\end{remark}

In the $2$-level
HSBM setting, we can provide theoretical results on the consistency of
our motif detection procedure, Algorithm \ref{alg:main}.  As it
happens, in this simpler setting, the algorithm terminates after Step
6; that is, after clustering the induced subgraphs into motifs. There
is no further recursion on these motifs.  We next extend Theorem
\ref{thm:1} to the multi-level HSBM setting as follows.  
In the following theorem, for an RDPG $G=(X,A)$,
let $\widehat
X_G$ be the ASE of $G$ and let
$X_G=X$ be the true latent positions of $G$; i.e., $\e(A)=XX^\top.$
\begin{theorem}
\label{thm:multilevel}
With notation as above, let $(X,A)\sim\mathrm{RDPG}(F^{(\ell)})$ be an instantiation of a $D^{(\ell)}$-dimensional, $\ell$-level HSBM with $\ell$ fixed.  Given Assumptions 3 and 4, further suppose that for each $k\in\{2,3,\ldots,\ell\}$ every $k$-level HSBM subgraph, $G$, of $(X,A)$ satisfies
\begin{itemize}
\item[i.] the number of components in the mixture, Eq. \ref{mixture} for $G$, which we shall denote by $R^G$, is known and fixed with respect to $n$, and $\{H^G_r\}_{r=1}^{R^G}$ are these $R^G$ subgraphs of $G$;
additionally, the $R^G$ mixture coefficients $\pi_G(\cdot)$ (associated with Eq. \ref{mixture} for $G$) are all strictly greater than 0;
\item[ii.] if $G$ is $D^G$ dimensional, then these $R^G$ subgraphs---when viewed as the subgraphs corresponding the diagonal blocks of $Z^G$ of Eq. \ref{eq:ZA} to be embedded into $<D^G$ dimensions as in Remark \ref{rem:lowerdimensional}---correspond to $M^G$ different motifs with $M^G$ fixed with respect to $n$.
\end{itemize}
It follows then that for all such $G$, 
the procedure in Algorithm \ref{alg:main} simultaneously yields perfect estimates $\widehat H^G_1=H^G_1$, $\widehat H^G_2=H^G_2,\cdots,$ $\widehat H^G_{R^G}=H^G_{R^G}$ of $\{H^G_r\}_{r=1}^{R^G}$ asymptotically almost surely.
It follows then that for for each such $G$, $\widehat{S}^{G}=[T(\hX_{\widehat H^{G}_i},\hX_{\widehat H^{G}_j})]$ yield consistent estimates of $S^{G}=[T(X_{H^{G}_i},X_{H^{G}_j})]$, which allows for the asymptotically almost surely perfect detection of the $M^G$ motifs.
\end{theorem}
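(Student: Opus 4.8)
\emph{Proof plan.} The plan is to prove the statement by induction on the number of levels $\ell$, taking $\ell=2$ as the base case, which is exactly Theorem~\ref{thm:1} together with its Corollary. For the inductive step I would fix an $\ell$-level HSBM $G=(X,A)$ and assume the result holds for every HSBM with at most $\ell-1$ levels that satisfies the analogous hypotheses. The overall scheme mirrors the proof of Theorem~\ref{thm:1}: at each node of the recursion tree of Algorithm~\ref{alg:main} we first certify perfect clustering into the coarse subgraphs, then recurse, and then certify consistent motif detection via the test statistic of Theorem~\ref{thm:mmd_unbiased_ase}; the ``simultaneous'' and ``asymptotically almost surely'' parts are then obtained by a union bound over the (finitely many, $n$-independent) nodes of the tree.

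First I would handle the coarsest partition of $G$ into the $R^{G}$ subgraphs $\{H^{G}_r\}_{r=1}^{R^{G}}$ produced by Steps~1--2. Since an HSBM is ultimately an SBM, $F^{(\ell)}=\sum_r \pi^{(\ell)}(r)F^{(\ell)}_r$ is a mixture whose components are mixtures of point masses, so $G$ viewed at its top level is a $2$-level HSBM in the sense of Definition~\ref{def:HSBM}; Assumption~4 at level $\ell$ (cf.\ Eq.~\eqref{eq:ZA}) supplies the block-diagonal-plus-perturbation subspace structure, Assumption~3 at level $\ell$ supplies a fixed affinity gap $p^{(\ell)}<q^{(\ell)}$, and hypothesis~(i) gives $\min_r\pi_G(r)>0$. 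Because $\ell$, the $R$'s, and the block dimensions are fixed in $n$, the ambient dimension $D^{G}$ is fixed, so Theorem~\ref{thm:minh} gives the $2\to\infty$ concentration $\|\hX-XW\|_{2\to\infty}=O((D^{G})^{1/2}\log^2 n/\sqrt{n})$ asymptotically almost surely, and Lemma~\ref{lem:perfect} then yields, asymptotically almost surely, that Steps~1--2 of Algorithm~\ref{alg:main} return $\widehat H^{G}_r=H^{G}_r$ for every $r$.

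Next, working on that event, I would pass to the recursion and to motif detection at level $\ell$. In Step~3 each $H^{G}_r\sim\mathrm{RDPG}(F^{(\ell-1)}_r)$ is re-embedded into its own smaller dimension $D^{(\ell-1)}_r$ as in Remark~\ref{rem:lowerdimensional}; it is an $h$-level HSBM with $h\le\ell-1$ that inherits Assumptions~3 and~4 and hypotheses~(i)--(ii) at its own levels, so the inductive hypothesis applies to it and the recursive invocations of Algorithm~\ref{alg:main} perfectly recover all of its descendant subgraphs and perfectly detect its motifs, asymptotically almost surely. For motif detection at level $\ell$ itself (Steps~4--5), each $H^{G}_r$ has $\Theta(n)$ vertices, since every mixing coefficient on the path from the root is a fixed positive constant and $\ell$ is fixed; hence for any pair $(r,s)$ the size ratio converges to a constant in $(0,1)$, and Theorem~\ref{thm:mmd_unbiased_ase} combined with the $2\to\infty$ consistency of Theorem~\ref{thm:minh} gives $|\widehat S^{G}(r,s)-S^{G}(r,s)|\to 0$ almost surely, with limit $0$ exactly when $H^{G}_r$ and $H^{G}_s$ share a motif and bounded away from $0$ otherwise; as in the Corollary to Theorem~\ref{thm:1}, clustering $\widehat S^{G}$ or its $p$-value matrix then recovers the $M^{G}$ motifs. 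Finally I would assemble the events: the recursion tree has depth at most $\ell$ and, since every $R^{G}$ is fixed in $n$, a number $N$ of nodes bounded independently of $n$, and a union bound over the two a.a.s.\ events (``perfect clustering'' and ``correct motif detection'') attached to each node gives total failure probability at most $N\eta$, which for large $n$ is still admissible in the paper's a.a.s.\ sense after absorbing $N$ into the exponent (e.g.\ replacing $c$ by $c+1$ so that $Nn^{-(c+1)}<\eta$).

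I expect the main obstacle to be the bookkeeping of this recursion rather than any single estimate: one must check that conditioning on perfect clustering at the higher levels leaves each induced subgraph an honest RDPG whose second-moment matrix still has the full rank required by Theorem~\ref{thm:minh} and whose affinity and subspace constants are still the fixed ones furnished by Assumptions~3--4, and one must make the quantitative a.a.s.\ union bound uniform over the (fixed) collection of tree nodes while verifying that every visited subgraph retains $\Theta(n)$ vertices so the rates do not degrade. A secondary point of care is that Algorithm~\ref{alg:main} recurses only on a representative of each motif, so the clustering conclusion $\widehat H^{G}_r=H^{G}_r$ is asserted precisely for the subgraphs $G$ actually visited, with the distributional equivalence of Definition~\ref{motif} transferring the structural conclusions to the unvisited members of each motif; one should also note that the test statistic in Step~4 is applied after embedding the compared subgraphs into a common dimension, consistent with the common-dimension convention used already in Theorem~\ref{thm:1}.
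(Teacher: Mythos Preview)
Your proposal is correct and matches the paper's approach exactly: the paper states only that the result ``follows immediately from Theorem~\ref{thm:1} and induction on $\ell$'' and omits the details, while you supply precisely that induction with the natural bookkeeping (union bound over the fixed-size recursion tree, $\Theta(n)$ subgraph sizes, inherited Assumptions~3--4). Your flagged obstacles are reasonable points of care but do not indicate any gap in the argument.
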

We note here that in Theorem \ref{thm:multilevel}, $\ell$ and the total number of subgraphs at each level of the hierarchy are fixed with respect to $n$.
As $n$ increases, the size of each subgraph at each level is also increasing (linearly in $n$), and therefore any separation between $p^{(k)}$ and $q^{(k)}$ at level $k$ will be sufficient to perfectly separate the subgraphs asymptotically almost surely.
The proof of the above theorem then follows immediately from Theorem \ref{thm:1} and induction on $\ell$, and so is omitted.

Theorem \ref{thm:multilevel} states that, under modest assumptions, Algorithm \ref{alg:main} yields perfect motif detection and classification at every level in the hierarchy. From a technical viewpoint, this theorem relies on a $2 \to \infty$ norm bound on the residuals of $\hat{X}$ about $X$ (see \ref{thm:minh_sparsity}), which is crucial to the perfect recovery of precisely the $R_G$ large-scale subraphs. This bound, in turn, only guarantees this perfect recovery of when the average degree is at least of order $\sqrt{n} \log^2(n)$. We surmise that for subsequent inference tasks that are more robust to the identification of the large-scale subgraphs, results can be established in sparser regimes.

Morever, when applying this procedure to graphs which violate our HSBM model assumptions (for example, when applying the procedure to real data), we encounter error propagation inherent to recursive procedures.
In Algorithm \ref{alg:main}, there are three main sources of error propagation: errorful clusterings; the effect of these errorfully-inferred subgraphs on $\widehat S$; and subsequent clustering and analysis within these errorful subgraphs.  We briefly address these three error sources below.

First, finite-sample clustering is inherently errorful and
misclustered vertices contribute to degradation of power in the motif
detection test statistic.  While we prove the asymptotic consistency
of our clustering procedure in Lemma \ref{lem:perfect}, there are a
plethora of other graph clustering procedures we might employ in the
small-sample setting, including modularity-based methods such as
Louvain \cite{networks08:_v} and \texttt{fastgreedy}
\cite{clauset08:_hierar}, and random walk-based methods such as
\texttt{walktrap} \cite{pons05:_comput}. Understanding the impact that
the particular clustering procedure has on subsequent motif detection
is crucial, as is characterizing the common properties of misclustered
vertices; e.g., in a stochastic block model, are misclustered vertices
overwhelmingly likely to be low-degree?

Second, although testing based on $T$ is asymptotically robust to a
modest number of misclustered vertices, namely $o(\max_i n\pi(i))$ vertices,
the finite-sample robustness of this test statistic remains open.
Lastly, we need to understand the robustness properties of further
clustering these errorfully observed motifs.  In
\cite{priebe2014statistical}, the authors propose a model for
errorfully observed random graphs, and study the subsequent impact of
the graph error on vertex classification.  Adapting their model and
methodology to the framework of spectral clustering will be essential
for understanding the robustness properties of our algorithm, and is
the subject of present research.

\section{Experiments}
\label{sec:data}
We next apply our algorithm to two real data networks:  the {\it Drosophila} connectome from \cite{takemura2013visual} and the Friendster social network.  
\subsection{Motif detection in the {\it Drosophila} Connectome}
 \begin{figure*}[t!]
    \centering
  \includegraphics[width=0.75\textwidth]{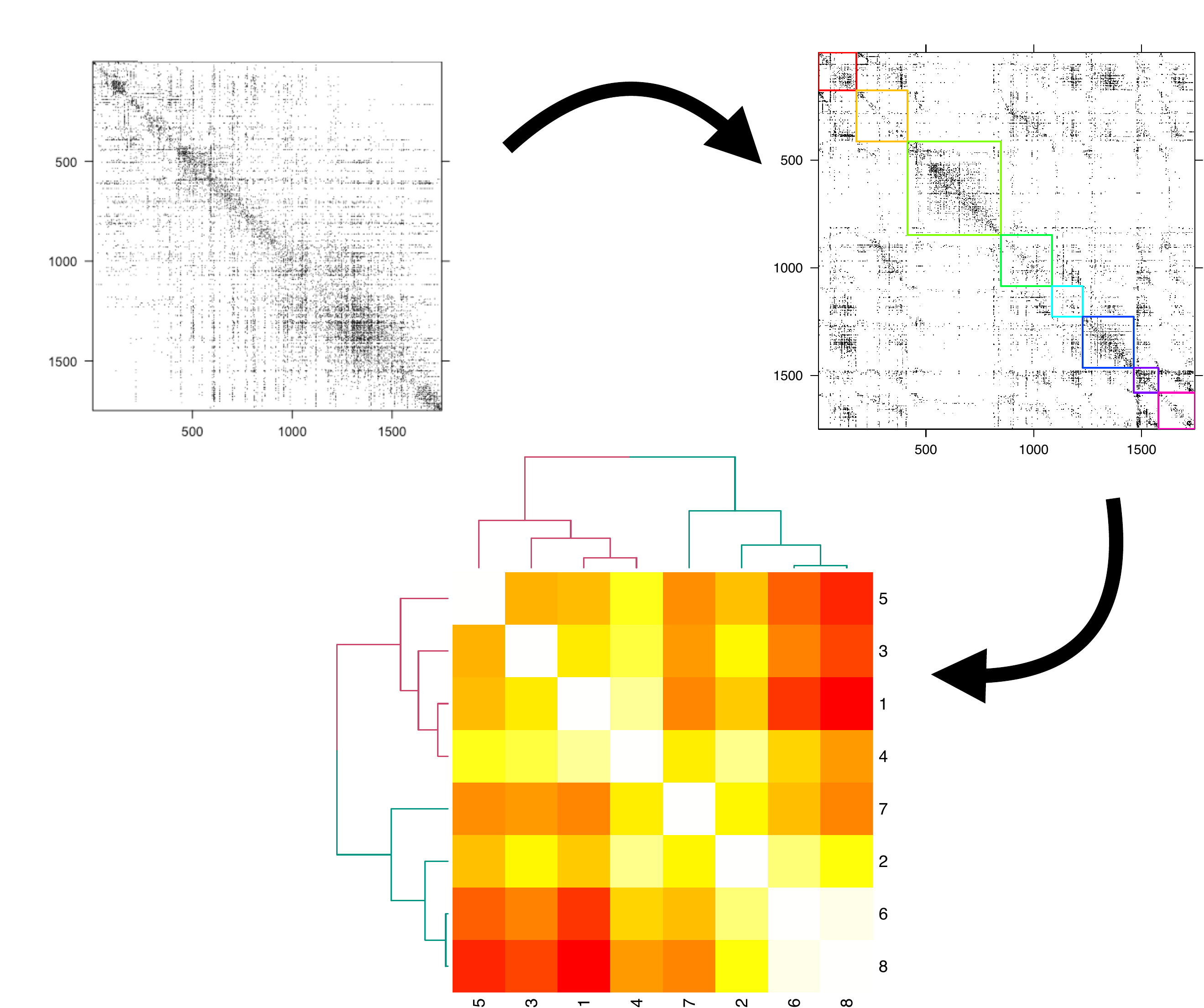}
  \caption{Visualization of our method applied to the {\it Drosophila} connectome.  We show the adjacency matrix (upper left), the clustering derived via ASE, projection to the sphere and clustering via Algorithm \ref{alg:main0}, and lastly $\widehat S$ calculated from these clusters.  
  Clustering the subgraphs based on this $\widehat S$ suggests two repeated motifs: $\{1,4\}$ and $\{2,6,8\}$.  
   Note that the hierarchical clustering also reveals 2nd level motif repetition within the second motif given by $\{6,8\}.$}
  \label{fig:fly1}
  \end{figure*}
The {\it cortical column conjecture} suggests that neurons are connected in a graph which exhibits motifs representing repeated processing modules. 
(Note that we understand that there is controversy surrounding
the definition and even the existence of ``cortical columns''; our consideration includes ``generic'' recurring
circuit motifs, and is not limited to the canonical Mountcastle-style column \cite{mountcastle1997columnar}.)
While the full cortical connectome necessary to rigorously test this conjecture is not yet available even on the scale of fly brains, in \cite{takemura2013visual} the authors were able to construct a portion of the {\it Drosophila} fly medulla connectome which exhibits columnar structure.

This graph is constructed by first constructing the full connectome between 379 named neurons (believed to be a single column) and then sparsely reconstructing the connectome between and within surrounding columns via a semi-automated procedure.
The resulting connectome\footnote{available from the open connectome project \url{http://openconnecto.me/graph-services/download/} (see {\it fly})} has 1748 vertices in its largest connected component, the adjacency matrix of which is visualized in the upper left of Figure \ref{fig:fly1}.  
We visualize our Algorithm \ref{alg:main} run on this graph in Figure \ref{fig:fly1}.
First we embed the graph into $\mathbb{R}^{13}$ ($13$ chosen according the the singular value thresholding method applied to a partial SCREE plot; see Remark \ref{rem:model}) and, to alleviate sparsity concerns, project the embedding onto the sphere.
The resulting points are then clustered into $\widehat R=8$ clusters ($\widehat R$ chosen as in Remark \ref{rem:model}) of sizes $|V(\widehat H_1)|=176,\,|V(\widehat H_2)|= 237,$ $\,|V(\widehat H_3)|=434$, $\,|V(\widehat H_4)|=237,$ $\,|V(\widehat H_5)|=142,$ $\,|V(\widehat H_6)|=237,$ $\,|V(\widehat H_7)|=115$, and $\,|V(\widehat H_8)|=170$ vertices.
These clusters are displayed in the upper right of Figure \ref{fig:fly1}.
We then compute the corresponding $\widehat S$ matrix after re-embedding each of these clusters (bottom of Figure \ref{fig:fly1}). 
In the heat map representation of $\widehat S$, 
the
similarity of the communities is represented on the spectrum between
white and red, with white representing highly similar communities and
red representing highly dissimilar communities. For example, the
bootstrapped $p$-value (from $200$ bootstrap samples) associated with $T(\hat{H}_6, \hat{H}_8)$ is $0.195$, with
$T(\hat{H}_2, \hat{H}_6)$ is $0.02$ and with $T(\hat{H}_6, \hat{H}_1)$
is $0.005$. 


We next apply hierarchical clustering to $\widehat S$ to uncover the repeated motif structure (with the resulting dendrogram displayed in Figure \ref{fig:fly1}).
Both methods uncovered two repeated motifs, the first consisting of
subgraphs 1 and 4 and the second consisting of subgraphs 2, 6, and 8.
Note that the hierarchical clustering also reveals 2nd level motif
repetition within the second motif given by $\{6,8\}.$ Indeed, our method
uncovers repeated {\em hierarchical} structure in this connectome,
and we are presently working with neurobiologists to determine the
biological significance of our clusters and motifs.


\subsection{Motif detection in the Friendster network}

We next apply our methodology to analyze and classify
communities in the Friendster social network. The Friendster social network
contains roughly $60$ million users and $2$ billion
connections/edges.
In addition, there are roughly $1$ million communities at the local scale.
Because we expect the social interactions in these communities to inform the function of the different communities, we expect to observe distributional repetition among the graphs associated with these communities.

Implementing Algorithm \ref{alg:main} on the very large Friendster
graph presents computational challenges.  To overcome this challenge
in scalability, we use the specialized SSD-based graph processing
engine \texttt{FlashGraph} \cite{zheng2014flashgraph}, which is
designed to analyze graphs with billions of nodes.  With
\texttt{FlashGraph}, we adjacency spectral embed the Friendster
adjacency matrix into $\mathbb{R}^{14}$---where $\widehat D=14$ is
chosen using singular value thresholding on the partial SCREE plot
(see Remark \ref{rem:model}).  Using the model selection methodology outlined in Remark \ref{rem:model}, we
find the best coarse-grained clustering of the graph is achieved with
$\widehat R=15$ large-scale clusters ranging
in size from $10^6$ to 15.6 million vertices (note that to alleviate sparsity concerns, we projected the embedding onto the sphere before clustering).  After re-embedding
the induced subgraphs associated with these $15$ clusters, we use a
linear time estimate of the test statistic $T$ to compute $\widehat
S$, the matrix of estimated pairwise dissimilarities among the
subgraphs. 
See Figure
\ref{fig:friend} for a heat map depicting
$\widehat{S}\in\mathbb{R}^{15\times 15}$.  In the heat map, the
similarity of the communities is represented on the spectrum between
white and red, with white representing highly similar communities and
red representing highly dissimilar communities.
From the figure, we can see clear repetition in the subgraph distributions; for example, we see a  repeated motif including subgraphs $\{\widehat H_5, \widehat H_4,\widehat H_3,\widehat H_2\}$ and a clear motif including subgraphs $\{\widehat H_{10},\widehat H_{12},\widehat H_9\}$.

Formalizing the motif detection step, we next employ hierarchical
clustering to cluster $\widehat S$ into motifs; see Figure
\ref{fig:friend} for the corresponding hierarchical clustering
dendrogram, which suggests that our algorithm does in fact uncover
repeated motif structure at the coarse-grained level in the Friendster
graph.  While it may be difficult to draw meaningful inference from
repeated motifs at the scale of hundreds of thousands to millions of
vertices, if these motifs are capturing a common HSBM structure within
the subgraphs in the motif, then we can employ our algorithm
recursively on each motif to tease out further hierarchical structure.
\begin{figure}[t!]
  \centering
  \includegraphics[width=0.4\textwidth]{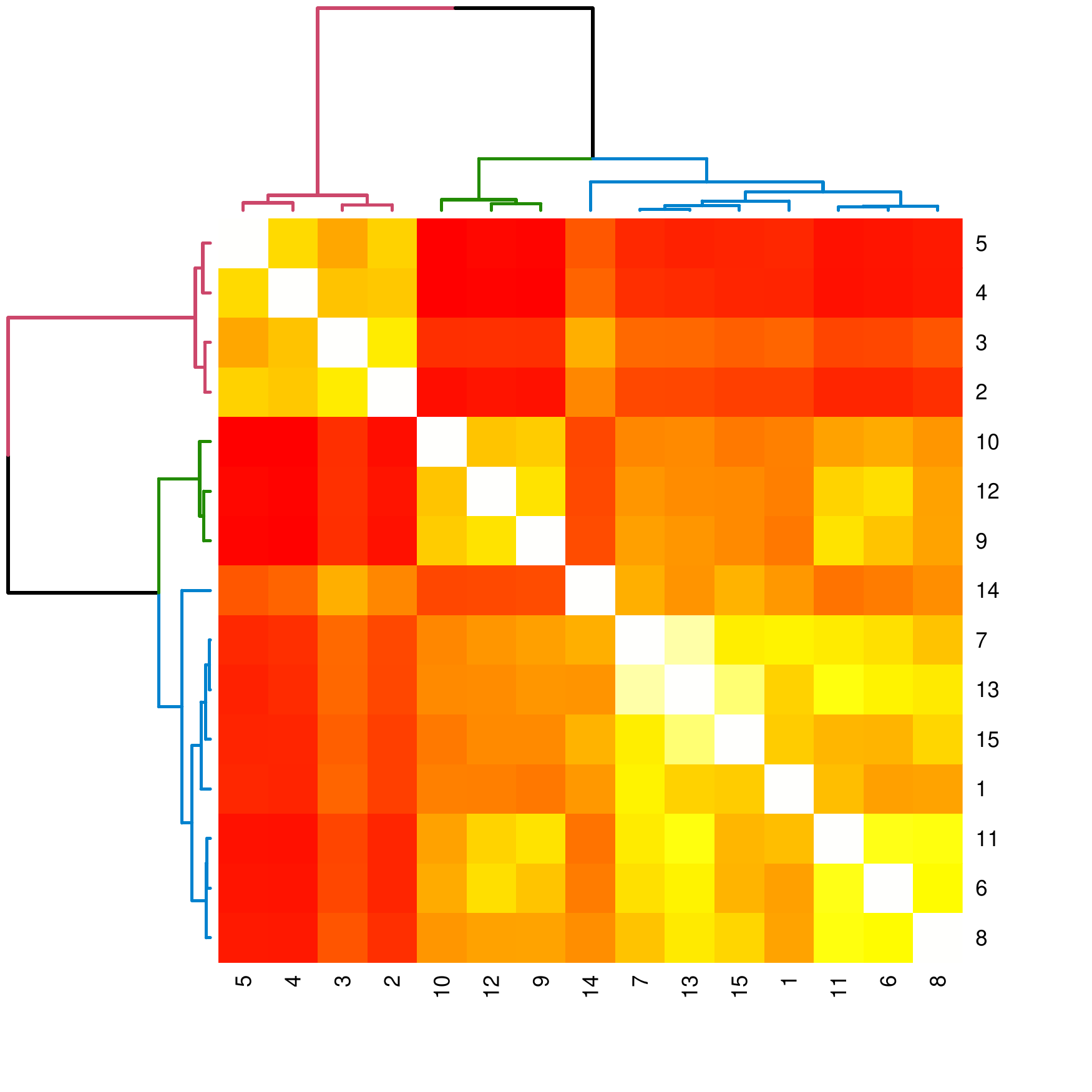}
\caption{Heat map depiction of the level one Friendster estimated dissimilarity matrix $\widehat{S}\in\mathbb{R}^{15\times 15}$.  In the heat map, the similarity of the communities is represented on the spectrum between white and red, with white representing highly similar communities and red representing highly dissimilar communities.  
  In addition, we cluster $\widehat S$ using hierarchical clustering and display the associated hierarchical clustering dendrogram.}
  \label{fig:friend}
\end{figure}

Exploring this further, we consider three subgraphs $\{\widehat
H_2,\widehat H_{8},\widehat H_{15}\}$, two of which are in the same
motif (8 and 15) and both differing significantly from subgraph 2
according to $\widehat S.$ 
We embed these subgraphs into
$\mathbb{R}^{26}$ (26 chosen as outlined in Remark \ref{rem:model}),
perform a Procrustes alignment of the vertex sets of the three
subgraphs, cluster each into $4$ clusters ($4$ chosen
to optimize silhouette width in $k$-means clustering), and estimate both the block connection probability matrices, 
$$\hat P_2=\begin{bmatrix}
0.000045& 0.00080& 0.00056& 0.00047\\
0.00080& 0.025& 0.0096& 0.0072\\
0.00057& 0.0096& 0.012& 0.0067\\
0.00047& 0.0072& 0.0067& 0.023
\end{bmatrix},$$
$$\hat P_8=\begin{bmatrix}
0.0000022& 0.000031& 0.000071& 0.000087\\
0.000031& 0.0097& 0.00046& 0.00020\\
0.000071& 0.00046& 0.0072& 0.0030\\
0.000087& 0.00020& 0.003& 0.016
\end{bmatrix},$$
$$\hat
P_{15}=\begin{bmatrix}
0.0000055& 0.00011& 0.000081& 0.000074\\
0.00011& 0.014& 0.0016& 0.00031\\
0.000081& 0.0016 &0.0065& 0.0022\\
0.000074& 0.00031& 0.0022& 0.019
\end{bmatrix},$$
 and the block membership probabilities $\hat \pi_2,\,\hat \pi_8,\,\hat
\pi_{15},$ for each of the three graphs.  We calculate
\begin{align*}
\|\hat P_2-\hat P_8\|_F&=0.033; \\
\|\hat P_2-\hat P_{15}\|_F&= 0.027;\\
\|\hat P_8-\hat P_{15}\|_F&=0.0058;\\
\|\hat \pi_2-\hat \pi_8\|&=0.043; \\
\|\hat \pi_2-\hat \pi_{15}\|&= 0.043;\\
\|\hat \pi_8-\hat \pi_{15}\|&=0.0010;
\end{align*}
which suggests that the repeated structure our algorithm
uncovers is {\it SBM substructure}, thus ensuring that we can proceed to
apply our algorithm recursively to the subsequent motifs.

As a final point, we recursively apply Algorithm \ref{alg:main} to the
subgraph $\widehat H_{11}$ .  We first embed the graph into
$\mathbb{R}^{26}$ (again, with $26$ chosen as outlined in Remark
\ref{rem:model}).  Next, using the model selection methodology
outlined in Remark \ref{rem:model}, we cluster the vertices into
$\widehat R=13$ large-scale clusters of sizes ranging from 500K to
2.7M vertices.  We then use a linear time estimate of the test
statistic $T$ to compute $\widehat S$ (see Figure \ref{fig:friend2}),
and note that there appear to be clear repeated motifs (for example,
subgraphs 8 and 12) among the $\widehat H$'s.  We run hierarchical
clustering to cluster the $13$ subgraphs, and note that the associated
dendrogram---as shown in Figure \ref{fig:friend2}---shows that our
algorithm again uncovered some repeated level-$2$ structure in the
Friendster network.  We can, of course, recursively apply our
algorithm still further to tease out the motif structure at
increasingly fine-grained scale.

Ideally, when recursively running Algorithm \ref{alg:main}, we would
like to simultaneously embed and cluster all subgraphs in the motif.
In addition to potentially reducing embedding variance, 
being able to efficiently simultaneously embed all the subgraphs in a
motif could greatly increase algorithmic scalability in large networks
with a very large number of communities at local-scale.  In order to
do this, we need to understand the nature of the repeated structure
within the motifs.  This repeated structure can inform an estimation
of a motif average (an averaging of the subgraphs within the motif),
which can then be embedded into an appropriate Euclidean space in lieu
of embedding all of the subgraphs in the motif separately.  However,
this averaging presents several novel challenges, as these subgraphs
may be of very different orders and may be errorfully obtained, which
could lead to compounded errors in the averaging step.  We are
presently working to determine a robust averaging procedure (or a
simultaneous embedding procedure akin to JOFC
\cite{priebe2013manifold}) which exploits the common structure within
the motifs.

 \begin{figure*}[t!]
  \centering
  \includegraphics[width=0.65\textwidth]{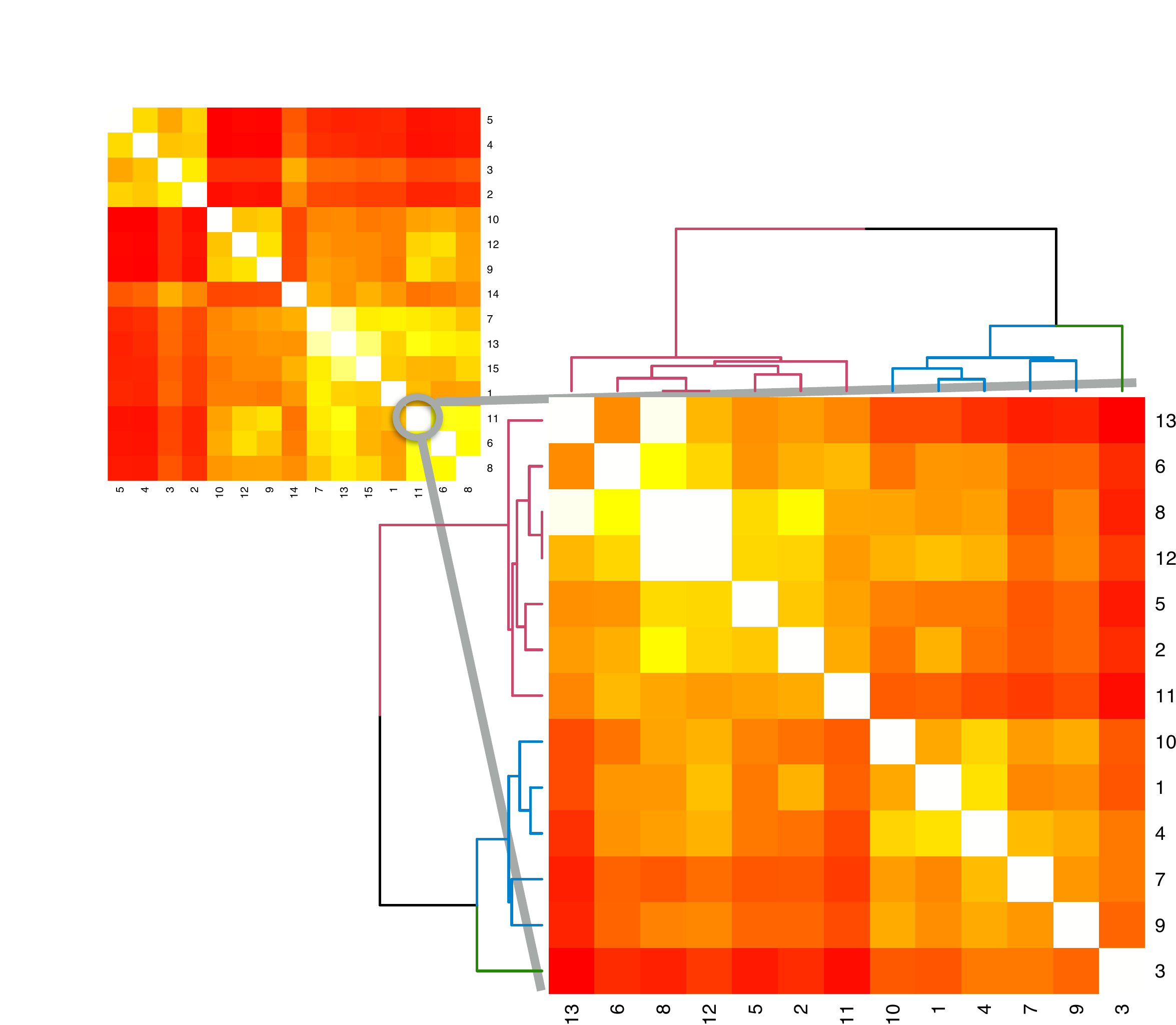}
  \caption{Heat map depiction of the level two Friendster estimated dissimilarity matrix $\widehat{S}\in\mathbb{R}^{13\times 13}$ of $\widehat H_{11}$.  In the heat map, the similarity of the communities is represented on the spectrum between white and red, with white representing highly similar communities and red representing highly dissimilar communities.  
  In addition, we cluster $\widehat S$ using hierarchical clustering and display the associated hierarchical clustering dendrogram.}
  \label{fig:friend2}
\end{figure*}

\section{Conclusion}
\label{sec:Conclusion}
In summary, we provide an algorithm for community detection and classification for hierarchical stochastic blockmodels. Our algorithm depends on a consistent lower-dimensional embedding of the graph, followed by a valid and asymptotically powerful nonparametric test procedure for the determination of distributionally equivalent subgraphs known as motifs.  In the case of a two-level hierarchical stochastic block model, we establish theoretical guarantees on the consistency of our estimates for the induced subgraphs and the validity of our subsequent tests. 

While the hierarchical stochastic block model is a very particular
random graph model, the hierarchical nature of the HSBM---that of smaller subgraphs that are densely connected within and somewhat loosely connected across---is a central feature of many networks.  Because our results are situated primarily in the context of random dot product graphs, and because random dot product graphs can be used to closely approximate many independent edge graphs \cite{tang:_univer}, we believe that our algorithm can be successfully adapted for the determination of multiscale structure in significantly more intricate models.


By performing community detection and classification on the {\em Drosophila} connectome and on the social network Friendster, we demonstrate that our algorithm can be feasibly deployed on real (and, in the case of Friendster, large!) graphs. We leverage state-of-the-art software packages \texttt{FlashGraph} and \texttt{igraph} to substantially reduce computation time. In both graphs, our algorithm detects and classifies multiple similar communities. Of considerable interest and ongoing research is the analysis of the functional or structural features of these distinct communities. Because our algorithm can be applied recursively to uncover finer-grained structure, we are hopeful that these methods can contribute to a deeper understanding of the implications of statistical subgraph similarity on the structure and function of social and biological networks.

\section{Acknowledgments}
The authors thank
 Zheng Da, Disa Mhembere, and Randal Burns for assistance in
 processing the Friendster graph using \texttt{FlashGraph}
 \cite{zheng2014flashgraph}, 
 Gabor Csardi and Edo Airoldi for assistance in implementing our algorithm
 in \texttt{igraph} \cite{csardi2006igraph},
 Daniel L. Sussman for helpful discussions in formulating the HSBM framework, and
 R. Jacob Vogelstein and Joshua T. Vogelstein for suggesting this line of research. 
This work is partially supported by 
 the XDATA \& GRAPHS programs of the Defense Advanced Research Projects Agency (DARPA).
\bibliography{hsbm2}
\bibliographystyle{IEEEtran}
\cleardoublepage
\appendix
\section{Proof of Theorem~\ref{thm:1}}
\label{sec:appendix}
We now provide proofs of  Theorem~\ref{thm:minh} and Lemma~\ref{lem:perfect}. We will state and prove Theorem~\ref{thm:minh} in slightly greater generality here, by first introducing the notion of a random dot product graph with a {\em given sparsity factor} $\rho_n$.
\begin{definition}[The $d$-dimensional random dot product graph with sparsity factor $\rho_n$]
\label{def:rdpg_sparsity}
  Let $F$ be a distribution on a set $\mathcal{X}\subset \mathbb{R}^d$
  satisfying $x^{\top} y \in[0,1]$ for all $x,y\in
  \mathcal{X}$. We say $(X,A)\sim \mathrm{RDPG}(F)$
  with sparsity factor $\rho_n \leq 1$ if the following hold. Let
  $X_1,\dotsc, X_n {\sim} F$ be independent random variables and define
\begin{equation}
\label{eq:defXP}
X=[X_1 \mid \cdots \mid X_n]^\top\in \mathbb{R}^{n\times d}\text{
  and } P= \rho_n X X^\top\in [0,1]^{n\times n}.
\end{equation}
As before, the $X_i$ are the latent positions for the random graph.  The matrix
$A\in\{0,1\}^{n\times n}$ is defined to be a symmetric, hollow matrix such that for all $i<j$, conditioned
on $X_i,X_j$ the $A_{ij}$ are independent and
\begin{equation}
 A_{ij} \sim\mathrm{Bernoulli}(\rho_n X_i^\top X_j),
\end{equation}
namely,
\begin{equation}
\Pr[A \mid X]=\prod_{i <j} (\rho_n X^{\top}_i
X_j)^{A_{ij}}(1- \rho_n X^{\top}_i X_j)^{(1-A_{ij})}
\end{equation}
\end{definition}
Recall that we denote the second moment matrix for the $X_i$ by
$\Delta =\mathrm{E}(X_1X_1^\top)$, and we 
assume that $\Delta$ is of rank $d$. 

\begin{definition}[Embedding of $A$ and $P$]
  Suppose that $A$ is as in Definition~\ref{def:rdpg_sparsity}. Then
  our estimate for the $\rho_n^{1/2} X$ (up to rotation) is
  $\hat{X}= U_{A} S_{A}^{1/2}$, where
  $S_{A} \in \mathbb{R}^{d\times d}$ is the diagonal submatrix with
  the $d$ largest eigenvalues (in magnitude) of $|A|$ and
  $U_{A} \in \mathbb{R}^{n\times d }$ is the matrix whose orthonormal
  columns are the corresponding eigenvectors. \label{def:emb}
  Similarly, we let $U_{P} S_{P} U_{P}^{\top}$ denote the spectral
  decomposition of $P$. Note that $P$ is of rank $d$.
\end{definition}

Theorem~\ref{thm:minh} follows as an easy consequence of the more general Theorem~\ref{thm:minh_sparsity}, which we state below.

\begin{theorem}\label{thm:minh_sparsity}
Let $(A, X) \sim \mathrm{RDPG}(F)$ with rank $d$ second moment matrix and sparsity factor $\rho_n$.  Let $E_n$ be the event that there
exists a rotation matrix $W \in \mathbb{R}^{d \times d}$ such that
\begin{equation*}
  \max_{i} \| \hat{X}_i - \rho_n^{1/2} W X_i \| \leq 
 \frac{C d^{1/2} \log^2{n}}{\sqrt{n\rho_n}}
   \end{equation*}
   where $C > 0$ is some fixed constant. Then $E_n$ occurs asymptotically almost surely.
\end{theorem}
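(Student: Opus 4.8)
The plan is to compare $\hat X = U_A S_A^{1/2}$ not with $\rho_n^{1/2}X$ directly but with the equivalent representation $U_P S_P^{1/2}$ of the scaled latent positions: since $P=\rho_nXX^\top = U_PS_PU_P^\top$, the columns of $U_P$ span $\mathrm{col}(X)$, so $U_PS_P^{1/2}=\rho_n^{1/2}XW_Q$ for some orthogonal $W_Q$, and any $2\to\infty$ bound for $\|\hat X - U_PS_P^{1/2}\widetilde W\|$ transfers to one for $\|\hat X-\rho_n^{1/2}XW\|$ with $W=W_Q\widetilde W$. First I would fix a ``good event'', of probability $1-O(n^{-c'})$, on which: (a) $\|A-P\|_{\mathrm{op}}\le C_1\sqrt{n\rho_n}\,\mathrm{polylog}(n)$, from a sparse matrix-concentration bound; (b) $\tfrac1nX^\top X$ is within $o(1)$ of the rank-$d$ matrix $\Delta$, so $\lambda_d(P)\asymp n\rho_n$, hence by Weyl $\|S_A^{-1}\|_{\mathrm{op}},\|S_P^{-1}\|_{\mathrm{op}}=O((n\rho_n)^{-1})$, $\|S_A-S_P\|_{\mathrm{op}}\le\|A-P\|_{\mathrm{op}}$, and (crucially) $U_A^\top AU_A=S_A$ because the top $d$ eigenvalues of $A$ are positive and of order $n\rho_n$ while all others are $o(n\rho_n)$ in magnitude; (c) $\|X\|_{2\to\infty}=\max_i\|X_i\|\le M$ by boundedness of $\mathcal X$, so $\|U_P\|_{2\to\infty}\le\rho_n^{1/2}M\,\lambda_d(P)^{-1/2}=O(n^{-1/2})$ (the $2\to\infty$ norm being invariant under the choice of orthonormal basis of $\mathrm{col}(U_P)$). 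Every estimate below refers only to the $d$-dimensional invariant subspaces of $A$ and $P$ and to the gap from $\lambda_d$ to $0$; no individual eigenvector is ever isolated, which is precisely what lets us drop the distinct-eigenvalue hypothesis of \cite{perfect}.

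Next I would use the identity $\hat X=AU_AS_A^{-1/2}$ (valid because the rank-$d$ truncation $\tilde A=U_AS_AU_A^\top$ of $A$ satisfies $(A-\tilde A)U_A=0$), set $E:=A-P$ and $\Omega:=U_P^\top U_A$, and compute $\hat X=U_PS_P\Omega S_A^{-1/2}+EU_AS_A^{-1/2}$. The heart of the argument is the choice of rotation: let $\widetilde W$ be the orthogonal factor in the polar decomposition of the $d\times d$ matrix $S_P^{1/2}\Omega S_A^{-1/2}$. From the subspace identity $\Omega^\top S_P\Omega = U_A^\top PU_A = S_A-U_A^\top EU_A$ one gets $(S_P^{1/2}\Omega S_A^{-1/2})^\top(S_P^{1/2}\Omega S_A^{-1/2}) = I-S_A^{-1/2}U_A^\top EU_AS_A^{-1/2}$, which lies within $O(\|E\|_{\mathrm{op}}/(n\rho_n))=O(\mathrm{polylog}/\sqrt{n\rho_n})$ of $I$, whence $\|S_P^{1/2}\Omega S_A^{-1/2}-\widetilde W\|_{\mathrm{op}}=O(\mathrm{polylog}/\sqrt{n\rho_n})$. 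Splitting $U_A=U_P\Omega+(I-U_PU_P^\top)U_A$ in the second term then produces
\begin{equation*}
\hat X - U_PS_P^{1/2}\widetilde W = \underbrace{U_PS_P^{1/2}\bigl(S_P^{1/2}\Omega S_A^{-1/2}-\widetilde W\bigr)}_{R_1} + \underbrace{EU_P\,\Omega S_A^{-1/2}}_{L} + \underbrace{E(I-U_PU_P^\top)U_A\,S_A^{-1/2}}_{R_2}.
\end{equation*}

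The leading term $L$ is handled row by row: conditionally on $X$, the $i$-th row $\sum_{j\ne i}(A_{ij}-P_{ij})U_P(j,:)$ of $EU_P$ is a sum of independent mean-zero vectors of norm at most $\|U_P\|_{2\to\infty}=O(n^{-1/2})$ whose conditional covariance has trace at most $\rho_n\sum_j\|U_P(j,:)\|^2=O(\rho_nd)$; a Bernstein inequality for sums of independent random vectors plus a union bound over $i$ give $\|EU_P\|_{2\to\infty}=O(\sqrt{\rho_nd}\,\log n)$, and hence $\|L\|_{2\to\infty}\le\|EU_P\|_{2\to\infty}\|\Omega\|_{\mathrm{op}}\|S_A^{-1/2}\|_{\mathrm{op}}=O(\sqrt d\log n/\sqrt n)\le C\sqrt d\log^2n/\sqrt{n\rho_n}$ since $\rho_n\le1$. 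For $R_1$, using $\|MN\|_{2\to\infty}\le\|M\|_{2\to\infty}\|N\|_{\mathrm{op}}$ together with $\|U_P\|_{2\to\infty}=O(n^{-1/2})$, $\|S_P^{1/2}\|_{\mathrm{op}}=O(\sqrt{n\rho_n})$, and the polar estimate above yields $\|R_1\|_{2\to\infty}=O(\mathrm{polylog}/\sqrt n)$. For $R_2$, the crude row bound $\|E\|_{2\to\infty}=O(\sqrt{n\rho_n})$, the $\sin\Theta$ (Davis--Kahan) bound $\|(I-U_PU_P^\top)U_A\|_{\mathrm{op}}=O(\|E\|_{\mathrm{op}}/\lambda_d(P))=O(\mathrm{polylog}/\sqrt{n\rho_n})$, and $\|S_A^{-1/2}\|_{\mathrm{op}}=O((n\rho_n)^{-1/2})$ give $\|R_2\|_{2\to\infty}=O(\mathrm{polylog}/\sqrt{n\rho_n})$. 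Summing and absorbing polylogarithms into $\log^2n$ gives the bound; choosing the exponent in the failure probability $O(n^{-c'})$ large enough to beat any fixed $\eta>n^{-c}$ then matches the definition of ``asymptotically almost surely'' used here.

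I expect the obstacle to be twofold. The conceptual half is exactly the device above: routing everything through the polar factor $\widetilde W$ of the $d\times d$ matrix $S_P^{1/2}\Omega S_A^{-1/2}$ and the identity $\Omega^\top S_P\Omega = S_A-U_A^\top EU_A$, so that no quantity in the proof depends on a basis of the $d$-dimensional eigenspaces --- the naive expansion in individual eigenvectors of $A$ and $P$ fails precisely when $\Delta$ has repeated eigenvalues. The technical half is securing the spectral-norm bound (a) and the row bound $\|E\|_{2\to\infty}=O(\sqrt{n\rho_n})$ with the correct polylogarithmic factors uniformly across the sparsity regime $n\rho_n\gtrsim\log^4n$ (so that the right-hand side of the theorem tends to $0$); this requires the sharper sparse random-matrix estimates for $\|E\|_{\mathrm{op}}$ rather than the crude matrix Chernoff bound, and if one pass through the decomposition loses a logarithm it is recovered by a single bootstrap iteration, reinserting the crude $2\to\infty$ bound into the estimate for $R_2$.
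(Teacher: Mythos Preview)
Your proposal is correct and takes a genuinely different route from the paper. The paper chooses as its rotation $W^{*}$ the orthogonal polar factor of $\Omega=U_P^\top U_A$ itself, and then must prove a separate ``commutation'' lemma $\|W^{*}S_A-S_PW^{*}\|_F=O(\log n)$ (obtained by expanding $U_P^\top(A-P)U_P$ and applying Hoeffding entrywise); it first establishes a Frobenius-norm decomposition $\hat X-U_PS_P^{1/2}W^{*}=(A-P)U_PS_P^{-1/2}W^{*}+(\text{terms of Frobenius norm }O(\log n/(n\rho_n)^{1/2}))$ and only then passes to the $2\to\infty$ norm on the leading term via a coordinatewise Hoeffding bound on $(A-P)u_j$. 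You instead take $\widetilde W$ to be the polar factor of $S_P^{1/2}\Omega S_A^{-1/2}$; the identity $\Omega^\top S_P\Omega=S_A-U_A^\top EU_A$ makes the near-orthogonality of this matrix immediate from the crude spectral bound on $E$, so the commutation lemma is bypassed entirely. Your three-term decomposition $R_1+L+R_2$ is then analyzed directly in the $2\to\infty$ norm, with a vector Bernstein inequality on the rows of $EU_P$ replacing the paper's entrywise Hoeffding. Both arguments avoid the distinct-eigenvalue hypothesis in exactly the same way---by never isolating individual eigenvectors and working only with the $d$-dimensional invariant subspace and the gap $\lambda_d(P)$---but your approach is more streamlined: it skips the intermediate Frobenius theorem and the commutation lemma at the cost of a slightly less standard choice of alignment. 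The paper's route, on the other hand, yields the Frobenius bound of its Theorem~\ref{thm:minh_frob} as an independent byproduct.
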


\subsection*{Proof of Theorem~\ref{thm:minh_sparsity}}
\label{sec:minh}
The proof of Theorem~\ref{thm:minh_sparsity} will follow from a succession of supporting results.  
We note that Theorem~\ref{thm:minh_frob}, which deals with the accuracy of spectral embedding estimates in Frobenius norm, may be of independent interest.  In what follows, for a matrix $A\in\mathbb{R}^{m\times m}$, $\|A\|$ will denote the spectral norm of $A$.

We begin with a short proposition.
\begin{proposition}
  \label{prop:uptua_close_rotation}
  Let $(A, X) \sim \mathrm{RDPG}(F)$ with sparsity factor $\rho_n$. Let
  $W_1 \bm{\Sigma} W_2^{\top}$ be the singular value
  decomposition of $U_{P}^{\top}
  U_{A}$. 
  Then asymptotically almost surely,
  \begin{equation*}
    \|U_{P}^{\top} U_{A} -
    W_1 W_2^{\top} \|_{F} = O((n \rho_n)^{-1})
  \end{equation*}
\end{proposition}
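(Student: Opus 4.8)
The plan is to compare $U_P^\top U_A$ to its nearest orthogonal matrix $W_1W_2^\top$ via the singular values $\bm\Sigma = \mathrm{diag}(\sigma_1,\dots,\sigma_d)$ of $U_P^\top U_A$, and to bound $1-\sigma_i^2$ using the sine-theta theorem applied to the column spaces of $U_A$ and $U_P$. First I would observe that, since $W_1,W_2$ are orthogonal, $\|U_P^\top U_A - W_1W_2^\top\|_F = \|\bm\Sigma - I\|_F = \big(\sum_i (1-\sigma_i)^2\big)^{1/2} \le \big(\sum_i (1-\sigma_i^2)^2\big)^{1/2}$, using $0 \le \sigma_i \le 1$ so that $1-\sigma_i \le 1-\sigma_i^2$. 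Next, the singular values $\sigma_i$ of $U_P^\top U_A$ are exactly the cosines of the principal angles between the $d$-dimensional subspaces $\mathrm{col}(U_P)$ and $\mathrm{col}(U_A)$; hence $\sum_i (1-\sigma_i^2) = \sum_i \sin^2\theta_i = \|\sin\Theta(U_A,U_P)\|_F^2$, and likewise $\|\bm\Sigma^2 - I\|_F = \|\sin^2\Theta\|_F \le \|\sin\Theta\|_F^2$ is even smaller — in any case $\|U_P^\top U_A - W_1W_2^\top\|_F \le \|\sin\Theta(U_A,U_P)\|_F^2$, so it suffices to show $\|\sin\Theta(U_A,U_P)\|_F = O((n\rho_n)^{-1/2})$.

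For the sin-theta bound I would invoke the Davis--Kahan / Wedin perturbation theorem for the $d$ leading eigenspaces of $|A|$ versus those of $P$ (equivalently of $A^\top A$ versus $P^2$, or one can work directly with the symmetric matrices $A$ and $P$ since $P$ is PSD of rank $d$). The relevant eigengap is $\lambda_d(P) \asymp n\rho_n$ asymptotically almost surely, because $P = \rho_n X X^\top$ and $\tfrac1n X^\top X \to \Delta$ which is rank $d$ with smallest eigenvalue bounded away from $0$; and the perturbation $\|A - P\| = O(\sqrt{n\rho_n})$ asymptotically almost surely by standard concentration for the spectral norm of the centered adjacency matrix (e.g. the bounds cited from \cite{STFP,perfect} in the dense/semi-dense regime, where $n\rho_n$ grows at least polylogarithmically). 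Davis--Kahan then gives $\|\sin\Theta(U_A,U_P)\| \le \frac{C\|A-P\|}{\lambda_d(P)} = O\big(\frac{\sqrt{n\rho_n}}{n\rho_n}\big) = O((n\rho_n)^{-1/2})$, and since there are only $d$ (a constant) principal angles the Frobenius version differs only by a $\sqrt d$ factor. Squaring yields $\|U_P^\top U_A - W_1W_2^\top\|_F = O((n\rho_n)^{-1})$, which is the claim.

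The main obstacle is obtaining the spectral-norm perturbation bound $\|A-P\| = O(\sqrt{n\rho_n})$ together with the eigengap lower bound $\lambda_d(P) = \Omega(n\rho_n)$, both holding asymptotically almost surely; these are exactly the ingredients already established in the spectral-embedding literature the paper builds on (\cite{STFP,sussman12:_univer,perfect}), so I would cite them rather than re-derive. The one slightly delicate point is the quadratic-versus-linear gain: naively Davis--Kahan gives $O((n\rho_n)^{-1/2})$ for the subspace distance, but here we save an extra factor because we compare $U_P^\top U_A$ to an orthogonal matrix and the discrepancy is governed by $1-\cos\theta_i \approx \tfrac12\sin^2\theta_i$, i.e. it is second order in the subspace angles; making that reduction $\|\bm\Sigma - I\|_F \le \|\sin\Theta\|_F^2$ rigorous (via $1-\sigma_i \le 1-\sigma_i^2 = \sin^2\theta_i$) is the crux of getting the stated $(n\rho_n)^{-1}$ rate rather than merely $(n\rho_n)^{-1/2}$.
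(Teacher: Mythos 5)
Your proposal is correct and follows essentially the same route as the paper's proof: identify the singular values of $U_P^{\top}U_A$ with cosines of principal angles, exploit the second-order gain $1-\sigma_i \le 1-\sigma_i^2 = \sin^2\theta_i$, and apply Davis--Kahan with $\|A-P\| = O(\sqrt{n\rho_n})$ and $\lambda_d(P) = \Omega(n\rho_n)$ to get the squared rate $O((n\rho_n)^{-1})$. The only cosmetic difference is that the paper bounds $\sum_i(1-\sigma_i^2) \le d\,\|U_AU_A^{\top}-U_PU_P^{\top}\|^2$ via the spectral-norm version of the sine-theta theorem, whereas you phrase the same bound through $\|\sin\Theta\|_F$.
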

\begin{proof}
Let $\sigma_1, \sigma_2, \dots, \sigma_d$ denote the singular values of
  $U_{P}^{\top} U_{A}$ (the diagonal
  entries of $\bm{\Sigma}$). Then $\sigma_i = \cos(\theta_i)$ where
  the $\theta_i$ are the principal angles between the subspaces
  spanned by $U_{A}$ and
  $U_{P}$. Furthermore, by the Davis-Kahan
  $\sin(\Theta)$ theorem (see e.g., Theorem 3.6 in \cite{stewart90:_matrix}),
  \begin{align*}
    \| U_{A} U_{A}^{\top} -
    U_{P} U_{P}^{\top} \| &=
    \max_{i} | \sin(\theta_i) |\leq
    \frac{\|A - P\|}{\lambda_{d}(P)}\\
  \end{align*}
  for sufficiently large $n$. Recall here $\lambda_{d}(P)$ denotes 
  the $d$-th largest eigenvalue of $P$. The spectral norm bound for $A
  - P$ from Theorem 6 in \cite{lu13:_spect}
  then gives 
  $$\| U_{A} U_{A}^{\top} -
    U_{P} U_{P}^{\top} \|\leq\frac{C \sqrt{n \rho_n}}{n \rho_n} = O((n \rho_n)^{-1/2}).$$ 

  We thus have
  \begin{equation*}
    \begin{split}
    \|U_{P}^{\top} U_{A} -
    W_1 W_2^{\top} \|_{F} &= \|\bm{\Sigma} -
    I \|_{F} = \sqrt{\sum_{i=1}^{d} (1 - \sigma_i)^2} \\ &\leq
    \sum_{i=1}^{d} (1 - \sigma_i) \leq \sum_{i=1}^{d} (1 -
    \sigma_i^{2})  \\ & = \sum_{i=1}^{d} \sin^{2}(\theta_i) \\
&\leq d 
    \|U_{A} U_{A}^{T} -
    U_{P} U_{P}^{\top} \|^{2}
   = O((n \rho_n)^{-1})
    \end{split}
  \end{equation*}
  as desired. 
\end{proof}
Denote by $W^{*}$ the orthogonal matrix
$W_1 W_2^{\top}$ as defined in the above
proposition. We now establish the following key lemma. The lemma
allows us to exchange the order of the orthogonal transformation
$W^{*}$ and the diagonal scaling transformation $S_A$ or $S_P$. 
\begin{lemma}
  \label{lem:order_bounds_on_minh_differences}
     Let $(A, X) \sim \mathrm{RDPG}(F)$ with sparsity factor $\rho_n$. Then asymptotically almost surely,
  \begin{equation*}
    \|W^{*} S_{A} - S_{P}
    W^{*} \|_{F} = O(\log n)
\end{equation*}
 and
\begin{equation*}
    \|W^{*} S_{A}^{1/2}  - S_{P}^{1/2}
    W^{*} \|_{F} = O(\log n(n \rho_n)^{-1/2})
  \end{equation*} 
\end{lemma}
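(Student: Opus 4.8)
The plan is to derive a ``master identity'' for $W^{*}S_A - S_P W^{*}$ from the eigenvector equations and then control the resulting error terms using Proposition~\ref{prop:uptua_close_rotation}, the spectral norm bound $\|A-P\| = O(\sqrt{n\rho_n})$ from Theorem~6 in \cite{lu13:_spect}, and a sharper estimate for the noise restricted to the signal subspace. First note that, since $P = \rho_n XX^\top$ is positive semidefinite with $\lambda_1(P),\lambda_d(P) = \Theta(n\rho_n)$ (using $\tfrac1n X^\top X \to \Delta$ and $\mathrm{rank}(\Delta)=d$) while $\|A-P\| = O(\sqrt{n\rho_n})$, the $d$ eigenvalues of $|A|$ of largest magnitude coincide a.a.s.\ with the $d$ largest eigenvalues of $A$, all positive; hence a.a.s.\ $A U_A = U_A S_A$, and $U_P^\top P = S_P U_P^\top$ since $P = U_P S_P U_P^\top$. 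Left-multiplying $A U_A = U_A S_A$ by $U_P^\top$ and substituting $A = P + (A-P)$ gives
\begin{equation*}
U_P^\top U_A \, S_A - S_P \, U_P^\top U_A = U_P^\top (A-P) U_A .
\end{equation*}
Writing $U_P^\top U_A = W^{*} + R$ with $\|R\|_F = O((n\rho_n)^{-1})$ (Proposition~\ref{prop:uptua_close_rotation}) and rearranging,
\begin{equation*}
W^{*} S_A - S_P W^{*} = U_P^\top (A-P) U_A + S_P R - R S_A .
\end{equation*}

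Next I would bound the three terms on the right-hand side. Since $\|S_P\| = \lambda_1(P) = \Theta(n\rho_n)$ and $\|S_A\| = \|A\| \le \|P\| + \|A-P\| = O(n\rho_n)$, we have $\|S_P R\|_F + \|R S_A\|_F \le (\|S_P\| + \|S_A\|)\,\|R\|_F = O(1)$. For the first term, decompose $U_A = U_P U_P^\top U_A + (I - U_P U_P^\top)U_A$; the $\sin\Theta$ estimate already obtained in the proof of Proposition~\ref{prop:uptua_close_rotation} gives $\|(I - U_P U_P^\top)U_A\| = \|U_A U_A^\top - U_P U_P^\top\| = O((n\rho_n)^{-1/2})$, whence
\begin{equation*}
\|U_P^\top (A-P)(I - U_P U_P^\top)U_A\|_F \le \sqrt{d}\,\|A-P\|\,\|(I - U_P U_P^\top)U_A\| = O(1) ,
\end{equation*}
and $\|U_P^\top (A-P) U_P U_P^\top U_A\|_F \le \|U_P^\top(A-P)U_P\|_F$. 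So everything reduces to bounding $\|U_P^\top(A-P)U_P\|_F$.

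The main obstacle is to show $\|U_P^\top(A-P)U_P\|_F = O(\log n)$: this is exactly the gain from projecting the noise onto the $d$-dimensional signal subspace, and it cannot be recovered from $\|A-P\|$ alone. Conditionally on $X$, the $(i,j)$ entry of $U_P^\top(A-P)U_P$ is the quadratic form $\sum_{k<l}(u_i(k)u_j(l) + u_i(l)u_j(k))(A_{kl} - P_{kl})$ in the independent, mean-zero, variance-$O(\rho_n)$ entries of $A-P$. Because $U_P = X(X^\top X)^{-1/2}$ up to rotation, with $\|X_k\| = O(1)$ and $(X^\top X)^{-1/2} = \Theta(n^{-1/2})$, the eigenvectors of $P$ are delocalized: $\max_i \|u_i\|_\infty = O(n^{-1/2})$ a.a.s. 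A Bernstein/Hanson--Wright tail bound for such a quadratic form then yields $|u_i^\top(A-P)u_j| = O(\sqrt{\rho_n \log n})$ a.a.s., and since $d$ is fixed, $\|U_P^\top(A-P)U_P\|_F = O(\sqrt{\rho_n\log n}) = O(\log n)$ (alternatively, this signal-subspace estimate can be imported from the corresponding step in \cite{perfect}). Assembling the three bounds proves the first claim, $\|W^{*}S_A - S_P W^{*}\|_F = O(\log n)$.

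Finally, the second bound follows from the first by a Sylvester-equation argument that lets one ``pull out'' a square root. Setting $E := W^{*} S_A^{1/2} - S_P^{1/2} W^{*}$, a direct computation gives
\begin{equation*}
S_P^{1/2} E + E \, S_A^{1/2} = W^{*} S_A - S_P W^{*} .
\end{equation*}
Since $S_P^{1/2}$ and $S_A^{1/2}$ are diagonal and positive definite with least eigenvalues $\sqrt{\lambda_d(P)} = \Theta(\sqrt{n\rho_n})$ and $\sqrt{\lambda_d(A)} \ge \sqrt{\lambda_d(P) - \|A-P\|} = \Theta(\sqrt{n\rho_n})$, solving this equation entrywise gives $\|E\|_F \le \|W^{*}S_A - S_P W^{*}\|_F \big/ \big(\sqrt{\lambda_d(P)} + \sqrt{\lambda_d(A)}\big) = O\big(\log n\,(n\rho_n)^{-1/2}\big)$, as claimed. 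The only genuinely delicate point in the whole argument is the polylogarithmic control of $\|U_P^\top(A-P)U_P\|$; all other steps are routine applications of Weyl's inequality, Davis--Kahan, and the triangle inequality.
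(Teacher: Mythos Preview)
Your proof is correct and follows essentially the same route as the paper. The master identity you derive, $W^{*}S_A - S_P W^{*} = U_P^\top(A-P)U_A + S_P R - R S_A$, is exactly what the paper obtains after its longer step-by-step expansion of $W^{*}S_A$; your derivation via $U_P^\top A U_A = U_P^\top U_A S_A$ is simply a cleaner way to arrive at the same thing. Likewise, your Sylvester-equation argument for the square-root bound is the same entrywise computation the paper does when it writes $(W^{*}S_A^{1/2} - S_P^{1/2}W^{*})_{ij} = W^{*}_{ij}(\lambda_j(A)-\lambda_i(P))/(\lambda_j^{1/2}(A)+\lambda_i^{1/2}(P))$, just packaged more abstractly.

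The one substantive difference is in controlling $\|U_P^\top(A-P)U_P\|_F$. You invoke eigenvector delocalization plus Bernstein/Hanson--Wright to get $O(\sqrt{\rho_n\log n})$, which is sharper but requires the extra delocalization step. The paper simply applies Hoeffding's inequality using only $\sum_{k<l}(2u_{ik}u_{jl})^2 \le 2$ (from $\|u_i\|_2 = \|u_j\|_2 = 1$), obtaining $O(\log n)$ with no delocalization needed. Both suffice for the stated lemma; the paper's route is more elementary, yours gives a tighter constant in $\rho_n$.
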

\begin{proof}
  Let $R = U_{A} - U_{P}
  U_{P}^{\top} U_{A}$. We note
  that $R$ is the residual after projecting
  $U_{A}$ orthogonally onto the column space of
  $U_{P}$, and note 
\begin{align*} 
\|U_{A} - U_{P}
    U_{P}^{\top} U_{A} \|_{F} = O((n \rho_n)^{-1/2}).
\end{align*}  
  We derive that
  \begin{align*}
    W^{*}& S_{A} =  (W^{*} -
    U_{P}^{\top} U_{A})
    S_{A}  +
    U_{P}^{\top} U_{A}
    S_{A}\\
 &= (W^{*} -
    U_{P}^{\top} U_{A}) S_{A} +
    U_{P}^{\top} A U_{A}
    \\ &=  (W^{*} -
    U_{P}^{\top} U_{A}) S_{A}  +U_{P}^{\top}(A - P)
    U_{A} +
    U_{P}^{\top}P U_{A} 
    \\ &=  (W^{*} -
    U_{P}^{\top} U_{A}) S_{A} +
    U_{P}^{\top}(A - P) R\\
     &\hspace{5mm}+ U_{P}^{\top}(A - P)
    U_{P} U_{P}^{\top}
    U_{A} + U_{P}^{\top}
    P U_{A} \\
    &= (W^{*} -
    U_{P}^{\top} U_{A}) S_{A} +
    U_{P}^{\top}(A - P) R\\
    &\hspace{5mm}+ U_{P}^{\top}(A - P)
    U_{P} U_{P}^{\top}
    U_{A} + S_{P} U_{P}^{\top}
    U_{A} 
  \end{align*}
  Writing $S_{P} U_{P}^{\top}
  U_{A} = S_{P}
  (U_{P}^{\top} U_{A} -
  W^{*}) + S_{P} W^{*}$ and
  rearranging terms, we obtain
 \begin{equation*}
    \begin{split}
      \|W^{*} S_{A} -
      S_{P} W^{*}\|_{F}\leq & \|W^{*}
      - U_{P}^{\top} U_{A} \|_{F}
      (\|S_{A}\| + \|S_{P}\|)\\
& +
      \|U_{P}^{\top}(A - P)
      R\|_{F}\\
&+  \|U_{P}^{\top}(A -
      P) U_{P}U_{P}^{\top}U_A\|_{F} \\
      \leq & O(1) + O(1)\\
      & + \|U_{P}^{\top}(A -
      P) U_{P}\|_{F}\|U_{P}^{\top}U_A\|
    \end{split}
  \end{equation*}
  asymptotically almost surely. Now, $\|U_{P}^{\top}U_A\| \leq 1$. Hence we can focus on the term  $U_{P}^{\top}(A -
      P) U_{P}$, which is a $d \times d$ matrix
      whose $ij$-th entry is of the form
      \begin{align*}
        u_{i}^{\top} (A - P) u_j &=
        \sum_{k=1}^{n} \sum_{l=1}^{n} (A_{kl} -
        P_{kl}) u_{ik} u_{jl} \\
&= 2 \sum_{k,l : k < l}
        (A_{kl} - P_{kl})u_{ik} u_{jl} - \sum_{k}
        P_{kk} u_{ik} u_{jk}
      \end{align*}
      where $u_{i}$ and $u_j$ are the $i$-th and $j$-th
      columns of $U_{P}$. Thus, conditioned on
      $P$, $u_{i}^{\top} (A - P)
      u_j$ is a sum of independent mean $0$ random
      variables and a term of order $O(1)$. 
        Now, by Hoeffding's inequality, 
 \begin{align*}
   \p&\left[ \bigg|\sum_{k,l : k < l} 2 (A_{kl}- P_{kl}) u_{ik} u_{jl}\bigg|\geq t \right] \\
  &\leq 2\exp
  \Bigl( \frac{-2t^2}{\sum_{k,l : k < l} (2u_{ik} u_{jl})^2}\Bigr) 
\leq 2\exp(-t^2).
 \end{align*}
 Therefore, each entry of
 $U_{P}^{\top}(A - P)
 U_{P}$ is of order
 $O(\log n)$ asymptotically almost surely, and as a consequence,
 $$\|U_{P}^{\top}(A - P)
 U_{P}\|_{F}$$
 is of order
 $O(\log n)$ asymptotically almost surely. Hence, 
$$\|W^{*} S_{A} -
 S_{P} W^{*}\| = O(\log n)$$ asymptotically almost surely. We establish
 $\|W^{*} S_{A}^{1/2} -
 S_{P}^{1/2} W^{*} \|_{F} = O(\log n(n \rho_n)^{-1/2})$ by
 noting that the $ij$-th entry of $W^{*} S_{A}^{1/2} -
 S_{P}^{1/2} W^{*}$ can be written as
 $$W^{*}_{ij} (\lambda_i^{1/2}(A) -
 \lambda_{j}^{1/2}(P)) = W^{*}_{ij} \frac{\lambda_{i}(A) -
 \lambda_{j}(P)}{\lambda_{i}^{1/2}(A) + \lambda_{j}^{1/2}(P)}  $$ 
and that the eigenvalues $\lambda_{i}^{1/2}(A)$ and $\lambda_{j}^{1/2}(P)$ are all
of order $\Omega(\sqrt{n \rho_n})$ (see \cite{sussman12:_univer}). 
\end{proof}
We next present Theorem~\ref{thm:minh_frob}, which extends earlier
results on Frobenius norm accuracy of the adjacency spectral embedding
from \cite{perfect} even when the second moment matrix
$\mathbb{E}[X_1 X_1^{\top}]$ does not have distinct eigenvalues.
\begin{theorem}
  \label{thm:minh_frob}
Let $(A, X) \sim \mathrm{RDPG}(F)$ with sparsity factor $\rho_n$. Let $E_n$ be the event that there exists a rotation matrix $W$ such that
\begin{align*}
  &\|\hat{X} - \rho_n^{1/2} X W
  \|_{F} \\
&= \|(A - P) U_{P}
  S_{P}^{-1/2} \|_{F} + O(\log (n)(n \rho_n)^{-1/2})
\end{align*}
Then $E_n$ occurs asymptotically almost surely.
\end{theorem}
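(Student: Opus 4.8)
The plan is to establish the sharper, more explicit assertion that, for a suitable orthogonal $W$,
\[
\hat{X} - \rho_n^{1/2} X W \;=\; (A - P)\, U_{P} S_{P}^{-1/2} W^{*} \;+\; E,
\qquad
\|E\|_{F} = O\big(\log(n)\,(n\rho_n)^{-1/2}\big),
\]
asymptotically almost surely, where $W^{*}$ is the orthogonal matrix from Proposition~\ref{prop:uptua_close_rotation}; the theorem then follows from the triangle inequality together with the fact that right multiplication by $W^{*}$ preserves the Frobenius norm. To set up, observe that $(\rho_n^{1/2} X)(\rho_n^{1/2} X)^{\top} = P = U_{P} S_{P} U_{P}^{\top}$, so $\rho_n^{1/2} X$ has a singular value decomposition $U_{P} S_{P}^{1/2} V^{\top}$ with $V$ orthogonal; choosing $W := V W^{*}$ yields $\rho_n^{1/2} X W = U_{P} S_{P}^{1/2} W^{*}$, and it remains to compare $U_{A} S_{A}^{1/2}$ with $U_{P} S_{P}^{1/2} W^{*}$. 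I would then assemble the asymptotically almost sure ingredients: the spectral bound $\|A - P\| = O(\sqrt{n\rho_n})$ from \cite{lu13:_spect}; Weyl's inequality together with the rank-$d$ assumption on $\mathbb{E}[X_1 X_1^{\top}]$, which give that the $d$ leading eigenvalues of $|A|$ and of $P$ are all of order $\Theta(n\rho_n)$ while the $(d{+}1)$-st eigenvalue of $|A|$ is $O(\sqrt{n\rho_n})$, so that a.a.s.\ the columns of $U_{A}$ are genuine positive eigenvectors of $A$, i.e.\ $A U_{A} = U_{A} S_{A}$; the bounds $\|U_{A} - U_{P} U_{P}^{\top} U_{A}\|_{F} = O((n\rho_n)^{-1/2})$ (Davis--Kahan) and $\|U_{P}^{\top} U_{A} - W^{*}\|_{F} = O((n\rho_n)^{-1})$ (Proposition~\ref{prop:uptua_close_rotation}); and, from Lemma~\ref{lem:order_bounds_on_minh_differences} via the identity $S_{P}^{-1/2} W^{*} - W^{*} S_{A}^{-1/2} = S_{P}^{-1/2}(W^{*} S_{A}^{1/2} - S_{P}^{1/2} W^{*}) S_{A}^{-1/2}$, the derived bound $\|W^{*} S_{A}^{-1/2} - S_{P}^{-1/2} W^{*}\|_{F} = O(\log(n)(n\rho_n)^{-3/2})$.

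The core computation proceeds from $\hat{X} = U_{A} S_{A}^{1/2} = A U_{A} S_{A}^{-1/2}$ (valid a.a.s.\ by the eigenvalue estimates above) by writing $A = (A - P) + P$ and using $P U_{P} = U_{P} S_{P}$ to obtain $\hat{X} = (A - P) U_{A} S_{A}^{-1/2} + U_{P} S_{P} U_{P}^{\top} U_{A} S_{A}^{-1/2}$. For the first summand I would make, in order, the substitutions $U_{A} \mapsto U_{P} U_{P}^{\top} U_{A}$, then $U_{P}^{\top} U_{A} \mapsto W^{*}$, then $W^{*} S_{A}^{-1/2} \mapsto S_{P}^{-1/2} W^{*}$; the three errors discarded along the way have Frobenius norms bounded --- using the ingredients above together with $\|A - P\| = O(\sqrt{n\rho_n})$ and $\|S_{A}^{-1/2}\| = O((n\rho_n)^{-1/2})$ --- by $O((n\rho_n)^{-1/2})$, $O((n\rho_n)^{-1})$, and $O(\log(n)(n\rho_n)^{-1})$ respectively, leaving $(A - P) U_{P} S_{P}^{-1/2} W^{*}$ up to $O(\log(n)(n\rho_n)^{-1/2})$. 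For the second summand I would substitute $U_{P}^{\top} U_{A} \mapsto W^{*}$ (error $\le \|S_{P}\|\,\|U_{P}^{\top} U_{A} - W^{*}\|_{F}\,\|S_{A}^{-1/2}\| = O((n\rho_n)^{-1/2})$) and then $S_{P} W^{*} S_{A}^{-1/2} \mapsto S_{P}^{1/2} W^{*}$, whose correction is $S_{P}(W^{*} S_{A}^{-1/2} - S_{P}^{-1/2} W^{*})$, of norm $\le \|S_{P}\| \cdot O(\log(n)(n\rho_n)^{-3/2}) = O(\log(n)(n\rho_n)^{-1/2})$, leaving $U_{P} S_{P}^{1/2} W^{*}$ up to $O(\log(n)(n\rho_n)^{-1/2})$. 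Adding the two pieces gives the displayed identity, and hence the theorem.

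The main obstacle is controlling precisely those error terms that carry the diagonal scalings $S_{A}$ and $S_{P}$ in the regime where $\mathbb{E}[X_1 X_1^{\top}]$ has repeated eigenvalues: there one has no control over the off-diagonal entries of $W^{*}$, so the argument of \cite{perfect} --- which exploits entrywise decay of $W^{*}$ under an eigengap --- breaks down. The key device is to arrange every estimate so that the full operator norm $\|A\| = \Theta(n\rho_n)$ never acts directly on a residual: each error term instead pairs a factor of order at most $\Theta(n\rho_n)$ --- either $\|S_{P}\|$, or $\|A - P\| = O(\sqrt{n\rho_n})$ combined with $\|S_{A}^{-1/2}\| = O((n\rho_n)^{-1/2})$ --- against the commutator-type bound $\|W^{*} S_{A}^{1/2} - S_{P}^{1/2} W^{*}\|_{F} = O(\log(n)(n\rho_n)^{-1/2})$ supplied by Lemma~\ref{lem:order_bounds_on_minh_differences}, which is exactly the estimate that survives without an eigengap. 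A secondary point requiring care is the a.a.s.\ identity $A U_{A} = U_{A} S_{A}$ --- that the $d$ leading eigenvalues of $|A|$ are genuinely positive eigenvalues of $A$ --- since this is what licenses the passage from $U_{A} S_{A}^{1/2}$ to $A U_{A} S_{A}^{-1/2}$ that launches the whole computation.
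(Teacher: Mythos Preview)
Your proposal is correct and follows essentially the same route as the paper: start from $\hat{X}=AU_{A}S_{A}^{-1/2}$, split $A=(A-P)+P$, and reduce each piece to the target via Proposition~\ref{prop:uptua_close_rotation}, Lemma~\ref{lem:order_bounds_on_minh_differences}, and the derived bound $\|S_{P}^{-1/2}W^{*}-W^{*}S_{A}^{-1/2}\|_{F}=O(\log n\,(n\rho_n)^{-3/2})$. Your organization is in fact slightly cleaner than the paper's: by handling $PU_{A}S_{A}^{-1/2}=U_{P}S_{P}U_{P}^{\top}U_{A}S_{A}^{-1/2}$ directly rather than adding and subtracting $U_{P}U_{P}^{\top}U_{A}S_{A}^{1/2}$, you avoid the extra $U_{P}U_{P}^{\top}(A-P)U_{P}W^{*}S_{A}^{-1/2}$ term that the paper must separately bound by a direct Hoeffding argument on $U_{P}^{\top}(A-P)U_{P}$; in your version that Hoeffding step is used only inside Lemma~\ref{lem:order_bounds_on_minh_differences}.
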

\begin{proof}
Let 
\begin{align*}
R_1 &= U_{P}
  U_{P}^{\top} U_{A} -
  U_{P} W^{*}\\
R_2 &= (W^{*}
      S_{A}^{1/2} - S_{P}^{1/2} W^{*}).
\end{align*}
We deduce that
  \begin{equation*}
    \begin{split}
      \hat{X} - U_{P}
      S_{P}^{1/2} W^{*} =&
      U_{A} S_{A}^{1/2}
      -U_{P} W^{*}
      S_{A}^{1/2}\\
& + U_{P}
      (W^{*}
      S_{A}^{1/2} - S_{P}^{1/2} W^{*}) \\
 =& (U_{A} -
      U_{P} U_{P}^{\top}
      U_{A}) S_{A}^{1/2}\\
& +
      R_1 S_{A}^{1/2} + U_{P}
      R_2 \\ 
=& U_{A} S_{A}^{1/2} -
      U_{P} U_{P}^{\top}
      U_{A} S_{A}^{1/2}\\
&  + R_1 S_{A}^{1/2} + U_{A}
      R_2 
    \end{split}
  \end{equation*}
  Now, $U_{P} U_{P}^{\top}
  P = P$ and $U_{A}
  S_{A}^{1/2} = A U_{A}
  S_{A}^{-1/2}$. Hence
  \begin{align*}
       \hat{X} - U_{P}
      S_{P}^{1/2} W^{*} = &(A -
      P) U_{A}
      S_{A}^{-1/2}\\
& - U_{P}
      U_{P}^{\top} (A - P)
      U_{A} S_{A}^{-1/2} \\
&+ R_1 S_{A}^{1/2} + U_{A}
      R_2 
  \end{align*}
  Writing 
\begin{align*}
R_3 =& U_{A} - U_{P}
  W^{*} \\
=& U_{A} - U_{P}
  U_{P}^{\top} U_{A} + R_1,
\end{align*}
we derive that
  \begin{equation*}
    \begin{split}
    \hat{X} - U_{P}
      S_{P}^{1/2} W^{*} =& (A -
      P) U_{P} W^{*}
      S_{A}^{-1/2}\\
& - U_{P}
      U_{P}^{\top}(A - P)
      U_{P} W^{*}
      S_{A}^{-1/2} \\
 &+ (I -
      U_{P} U_{P}^{\top}) (A - P)
      R_{3} S_{A}^{-1/2}\\
& + R_1 S_{A}^{1/2} + U_{A}
      R_2 
    \end{split}
  \end{equation*}
  Now 
\begin{align*}
\|R_{1}\|_{F} & = O((n \rho_n)^{-1}),\\
 \|R_2\|_{F} &= O(\log n(n \rho_n)^{-1/2}), \textrm{ and } \\
 \|R_3\|_{F} &= O((n \rho_n)^{-1/2});
\end{align*}
indeed, we recall
\begin{align*} 
\|U_{A} - U_{P}
    U_{P}^{\top} U_{A} \|_{F}  = O((n \rho_n)^{-1/2}).
\end{align*}   
Furthermore, Hoeffding's inequality guarantees that
  \begin{align*}
    \|&U_{P}
      U_{P}^{\top}(A - P)
      U_{P}  W^{*}
      S_{A}^{-1/2} \|_{F}  \\
&\leq \|U_{P}^{\top}(A - P)
      U_{P}\|_{F} \|S_{A}^{-1/2}
      \|_{F}= O(\log n(n \rho_n)^{-1/2})
  \end{align*}
 As a consequence, 
  \begin{align*}
     \|&\hat{X} - U_{P}
      S_{P}^{1/2} W^{*}\|_{F}\\
      &=  \|(A -
      P) U_{P} W^{*}
      S_{A}^{-1/2}\|_{F} + O(\log n(n\rho_n)^{-1/2}) \\ 
&= 
      \|(A - P) U_{P}
      S_{P}^{-1/2} W^{*}\\
& \hspace{5mm}+  (A - P) U_{P}
      (S_{P}^{-1/2} W^{*} - W^{*}
      S_{A}^{-1/2}) \|_{F}\\
      &\hspace{5mm}+ O(\log n(n \rho_n)^{-1/2})
  \end{align*}
Using a very similar argument as that employed in the proof of  Lemma~\ref{lem:order_bounds_on_minh_differences}, we can show that
  \begin{equation*}
    \|S_{P}^{-1/2} W^{*} - W^{*}
      S_{A}^{-1/2} \|_{F} = O(\log n(n \rho_n)^{-3/2})
  \end{equation*}
  Recall that 
\begin{align*}
&\|((A - P) U_{P}
      (S_{P}^{-1/2} W^{*} - W^{*}
      S_{A}^{-1/2}) \|_{F}\ \\
&\leq \|(A-P)U_P\|\|(S_{P}^{-1/2} W^{*} - W^{*}
      S_{A}^{-1/2}) \|_{F}
\end{align*}
Further, as already mentioned, Theorem 6 of \cite{lu13:_spect} ensures that $\|(A-P)\|$ is of order $O(\sqrt{n \rho_n})$ asymptotically almost surely; this implies, of course, identical bounds on $\|(A-P)U_P\|)$.  We conclude that 
  \begin{align}
    \label{eq:2}
    \|&\hat{X} - U_{P}
      S_{P}^{1/2} W^{*}\|_{F}\notag\\
      &= \|(A - P) U_{P}
      S_{P}^{-1/2} W^{*} \|_{F}+
      O(\log (n)(n\rho_n)^{-1/2}) \notag\\ &= \|(A - P) U_{P}
      S_{P}^{-1/2} \|_{F} + O(\log (n)(n\rho_n)^{-1/2}).
  \end{align}
  Finally, to complete the proof, we note that 
$$\rho_n^{1/2} X =
  U_{P} S_{P}^{1/2} W$$
  for some orthogonal matrix $W$. Since $W^{*}$ is also
  orthogonal, we conclude that there exists some orthogonal $\tilde{W}$
for which 
$$\rho_n^{1/2} X \tilde{W} = U_{P}
  S_{P}^{1/2} W^{*},$$
  as desired. 
\end{proof}
We are now ready to prove Theorem~\ref{thm:minh_sparsity}.
\begin{proof}
To establish Theorem~\ref{thm:minh_sparsity}, we begin by noting that 
\begin{align*}
\|\hat{X} - \rho_n^{1/2} X W
  \|_{F} &= \|(A - P) U_{P}
  S_{P}^{-1/2} \|_{F}\\
  &\hspace{5mm} + O(\log (n)(n \rho_n)^{-1/2})
  \end{align*}
  and hence
  \begin{align*}
   &\max_{i} \| \hat{X}_i - \rho_n^{1/2} W X_i \|\\ 
   &\leq 
   \frac{1}{\lambda_{d}^{1/2}(P)} 
   \max_{i} \|((A - P) U_{P})_{i} \|+
   O(\log(n)(n\rho_n)^{-1/2})  \\
   & \leq \frac{d^{1/2}}{\lambda_{d}^{1/2}(P)} 
   \max_{j} \|(A - P) u_j \|_{\infty}+ O(\log (n)(n\rho_n)^{-1/2})
   \end{align*}
   where $u_j$ denotes the $j$-th column of
   $U_{P}$. 
   Now, for a given $j$ and a given index $i$, the $i$-th element of
   the vector $(A - P) u_j$ is of the form
   \begin{equation*}
     \sum_{k} (A_{ik} - P_{ik}) u_{jk}
   \end{equation*}
   and once again, by Hoeffding's inequality, the above term is $O(\log n)$ asymptotically almost surely. Taking the union bound
   over all indices $i$ and all columns $j$ of $U_{P}$, we conclude
   \begin{align*}
    \max_{i} \| \hat{X}_i - \rho_n^{1/2} W X_i \|& \leq
    \frac{C d^{1/2}}{\lambda_{d}^{1/2}(P)} \log^2(n)\\
    &\hspace{5mm} + O(\log (n)(n\rho_n)^{-1/2})\\
& \leq \frac{C d^{1/2} \log^2{n}}{\sqrt{n\rho_n}}
   \end{align*}
   as desired.
\end{proof}

\subsection*{Proof of Lemma \ref{lem:perfect}}
Our assumption that $p<q$, and Theorem \ref{thm:minh}, gives that 
$$\hat p:=\max_{i,j:i\neq j}\max_{\ell,h}\langle \widehat\xi^{(2)}_i(\ell,:), \widehat\xi^{(2)}_j(h,:) \rangle,$$
and 
$$\hat q:=\min_{i}\min_{\ell,h}\langle \widehat\xi^{(2)}_i(\ell,:), \widehat\xi^{(2)}_i(h,:) \rangle,$$
are such that $\hat p<\hat q$ asymptotically almost surely.
The proof of Lemma \ref{lem:perfect} follows from the following proposition and the fact that $\hat p<\hat q$ asymptotically almost surely.
\begin{proposition}
\label{prop:casnaaccuracy}
Given the assumptions of Lemma \ref{lem:perfect} and Lemma \ref{thm:minh}, let $E_n$ be the even that the the set $\mathcal{S}_n$ obtained in Algorithm \ref{alg:main0} satisfies 
$$\left|\mathcal S_n\cap \{\widehat\xi^{(2)}_i(\ell,:)\}_{\ell=1}^{|V(H_i)|}\right|=1$$ for all $i\in[R]$.  Then $E_n$ occurs asymptotically almost surely.
\end{proposition}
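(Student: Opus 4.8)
The plan is to reduce the statement to a purely combinatorial fact about the seed-update rule in Algorithm~\ref{alg:main0}, run on the high-probability event on which the embedded points are cleanly separated by subgraph. First I would invoke Theorem~\ref{thm:minh} together with Assumptions 1 and 2, exactly as in the paragraph preceding the proposition, to fix the event $\mathcal{E}_1 := \{\hat p < \hat q\}$, which holds asymptotically almost surely; by the very definitions of $\hat p$ and $\hat q$, on $\mathcal{E}_1$ any two embedded points in a common subgraph have inner product at least $\hat q$, while any two in different subgraphs have inner product at most $\hat p < \hat q$. I would also fix the event $\mathcal{E}_2$ that every block $H_i$ is non-empty, which fails with probability at most $R(1-\pi_{\min})^n \to 0$. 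It then suffices to show that on $\mathcal{E}_1 \cap \mathcal{E}_2$ the set $\mathcal{S}_n$ \emph{deterministically} contains exactly one embedded point from each subgraph.

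For $0 \le i \le n$, let $m_i$ be the number of distinct subgraphs represented among the rows of $\hat X$ lying in $\mathcal{S}_i$. Since each pass of the first loop either leaves $\mathcal{S}$ unchanged or replaces one element with one element, $|\mathcal{S}_i| = R$ for all $i$, so $m_i = R$ is equivalent to ``exactly one embedded point per subgraph''. I would first record the key separation consequence: whenever $m_{i-1} < R$, pigeonhole gives two distinct elements of $\mathcal{S}_{i-1}$ in a common subgraph, so the maximizing pair $(\tilde y, \tilde z)$ satisfies $\langle \tilde y, \tilde z\rangle \ge \hat q$ and lies within one subgraph (any cross-subgraph pair has inner product $\le \hat p < \hat q$); conversely, whenever $m_{i-1} = R$, the vertex $v := \hat X(i,:)$ processed at step $i$ shares its subgraph with some $x \in \mathcal{S}_{i-1}$, hence $\max_{x \in \mathcal{S}_{i-1}}\langle v, x\rangle \ge \hat q > \hat p \ge \langle \tilde y, \tilde z\rangle$, so no swap occurs and $m_i = R$. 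In particular $R$ is absorbing.

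Next I would establish two invariants. (\textbf{No loss}) $m_i \ge m_{i-1}$ for all $i$: if no swap occurs this is immediate; if a swap occurs then $m_{i-1} < R$, so by the above $\tilde y$ and $\tilde z$ lie in a common subgraph, whence deleting $\tilde z$ leaves $\tilde y$ representing that subgraph and inserting $v$ removes no representative, so no subgraph is emptied. (\textbf{Gain}) If $m_{i-1} < R$ and $v$ lies in a subgraph $s$ unrepresented in $\mathcal{S}_{i-1}$, then a swap occurs and $m_i = m_{i-1}+1$: here $\langle v, x\rangle \le \hat p$ for every $x \in \mathcal{S}_{i-1}$ (all lie in subgraphs other than $s$), so $\max_x\langle v,x\rangle \le \hat p < \hat q \le \langle \tilde y,\tilde z\rangle$, triggering the swap; the deleted $\tilde z$ belongs to an already multiply-represented subgraph (since $\tilde y$ co-resides and is distinct from $s$), while $v$ covers the previously missing $s$. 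To finish, for each subgraph $s$ let $i_s$ be the first index whose processed vertex lies in $s$ (well defined and $\le n$ on $\mathcal{E}_2$). Because swaps only ever insert the currently processed vertex, $s$ can acquire a representative no earlier than $\mathcal{S}_0$ or step $i_s$; if $s$ is already represented in $\mathcal{S}_0$, it stays represented by No loss, and otherwise $s$ is unrepresented in $\mathcal{S}_{i_s-1}$, forcing $m_{i_s-1} < R$, so Gain applies at step $i_s$ and $s$ becomes (and thereafter, by No loss, remains) represented. Hence all $R$ subgraphs are represented in $\mathcal{S}_n$, and since $|\mathcal{S}_n| = R$, exactly one point from each; this is precisely $E_n$.

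The main obstacle I anticipate is the case bookkeeping in the No-loss invariant: one must check, across all swap configurations (whether the inserted vertex's subgraph was already represented, and whether it coincides with $\tilde z$'s subgraph), that no subgraph is stripped of its last representative. The one ingredient that makes all of this go through is the identification of the maximizing pair $(\tilde y,\tilde z)$ as lying within a single subgraph whenever $m_{i-1}<R$, which rests entirely on $\hat p < \hat q$ combined with $|\mathcal{S}_{i-1}| = R$; once that is in hand the rest is routine.
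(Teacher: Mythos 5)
Your proof is correct and follows essentially the same route as the paper's: the paper's two key claims (an unrepresented cluster gains a representative when one of its vertices is processed, and a uniquely-represented cluster never loses its representative) are exactly your Gain and No-loss invariants, both resting on the same pigeonhole identification of the maximizing pair $(\tilde y,\tilde z)$ as intra-cluster whenever some cluster is unrepresented. Your version is somewhat more explicit about the bookkeeping (the counter $m_i$, the non-emptiness event, and the monotonicity of representation), details the paper leaves as "immediate."
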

\begin{proof}
For each $i\in[R]$, define
$C_j=\{\hxi^{(2)}_i(\ell,:)\}_{\ell=1}^{|V(H_i)|}$.
The proposition follows immediately from proving 
\begin{itemize}
\item[(1)] For all $i\in[n]$, if $\hX(i,:)$ belongs to $C_j$ and 
$|\mathcal S_{i-1}\cap C_j|=0$, then $\hX(i,:)$ will be added to $\mathcal S_{i-1}.$
\item[(2)] For all $i\in[n]$, if $s\in \mathcal{S}_{i-1}$ belongs to $C_j$ and $|\mathcal S_{i-1}\cap C_j|=1$, then $s\in \mathcal{S}_{i}$ (i.e., $s$ will not be removed from $\mathcal S_{i-1}$).
\end{itemize}

For (1), for fixed $i\in[n]$, if $\hX(i,:)$ belongs to $C_j$ and $|\mathcal S_{i-1}\cap C_j|=0$, then 
$$\max_{s\in \mathcal S_{i-1}}\langle \hX(i,:),s\rangle\leq \hat p.$$
By the pigeonhole principle, there exist 
$y,z\in \mathcal S_{i-1}$ such that 
$y,z\in C_k$ for some $k\in[R],\, k\neq j$.
Thus $\langle y,z\rangle \geq \hat q$, and 
$$\max_{s\in \mathcal S_{i-1}}\langle \hX(i,:),s\rangle< \max_{x,w\in \mathcal S_{i-1}}\langle x,w\rangle,$$
and hence $\hX(i,:)$ will be added to $\mathcal S_{i-1}.$

For (2), for fixed $i\in[n]$, 
suppose $s\in \mathcal{S}_{i-1}$ belongs to $C_j$ and $|\mathcal S_{i-1}\cap C_j|=1$.
Consider two cases.  First, suppose that for each $k\in[R]$, $|\mathcal S_{i-1}\cap C_k|=1$.  
Then 
$$\max_{s\in \mathcal S_{i-1}}\langle \hX(i,:),s\rangle\geq \hat q>\hat p> \max_{x,w\in \mathcal S_{i-1}}\langle x,w\rangle,$$ 
and $\hX(i,:)$ will not be added to $\mathcal S_{i-1},$ and so $s\in \mathcal S_{i}.$
Otherwise, there exists $y,z\in\mathcal S_{i-1}$ satisfying $y,z\in C_k$ for some $k\in[R]$, $k\neq j$.
Therefore 
$$\max_{x\in\mathcal S_{i-1}}\langle x,s\rangle\leq \hat p<\hat q\leq\langle y,z\rangle\leq \max_{x,w\in\mathcal S_{i-1}}\langle x,w\rangle,$$
and even if $\hX(i,:)$ is added to $\mathcal S_{i-1}$, then $s$ will not be removed from $\mathcal S_{i-1}$, as desired.
\end{proof}
To finish the proof of Lemma \ref{lem:perfect}, from Proposition \ref{prop:casnaaccuracy}, the set $S_n$ will contain a single row of $\hxi^{(j)}$ for each $j\in[R]$ asymptotically almost surely.
For each $i\in[n]$, if $\hX(i,:)\in C_k$, then asymptotically almost surely
$$\text{argmax}_{j}\langle \hX(i,:),s_j\rangle\in C_k,$$
as desired.

\end{document}